\PassOptionsToPackage{table}{xcolor}
\documentclass{article}

\usepackage{iclr2027_conference}
\newif\ifarxivpreprint
\arxivpreprinttrue
\ifarxivpreprint
  \iclrfinalcopy
\fi

\usepackage{times}
\usepackage{microtype}
\usepackage{graphicx}
\usepackage{float}
\usepackage{subcaption}
\usepackage{booktabs}
\usepackage{array}
\usepackage{multirow}
\usepackage{amsmath,amssymb,amsfonts,amsthm}
\usepackage{mathtools}
\usepackage{algorithm}
\usepackage{algpseudocode}
\usepackage{enumitem}
\usepackage{xcolor}

\definecolor{oursbg}{RGB}{248,245,255}
\definecolor{ModernPurple}{RGB}{109,40,217}
\definecolor{ModernPurpleDark}{RGB}{76,29,149}
\definecolor{moderngreen}{RGB}{32,121,89}
\definecolor{modernred}{RGB}{185,28,28}
\definecolor{ChangeOrange}{RGB}{230,112,0}
\definecolor{TheoryGreen}{RGB}{34,85,73}
\definecolor{SafeGreen}{RGB}{34,85,73}
\definecolor{SafeTint}{RGB}{232,248,241}
\definecolor{BestGreen}{RGB}{4,120,87}
\definecolor{BestTint}{RGB}{232,248,241}
\definecolor{TableHead}{RGB}{250,247,255}
\definecolor{TableRule}{RGB}{221,214,254}

\usepackage[
    colorlinks=true,
    linkcolor=TheoryGreen,
    urlcolor=TheoryGreen,
    citecolor=TheoryGreen
]{hyperref}
\usepackage{cleveref}

\DeclareRobustCommand{\method}{\textsc{VICS}}
\DeclareRobustCommand{\vicsg}{\textsc{VICS-G}}
\DeclareRobustCommand{\vicss}{\textsc{VICS-S}}
\DeclareRobustCommand{\vicsr}{\textsc{VICS-R}}
\DeclareRobustCommand{\rcd}{\textsc{RCD}}
\DeclareRobustCommand{\vla}{\textsc{VLA}}
\newcommand{\policy}{\pi_\theta}
\newcommand{\priorlaw}{p_0}
\newcommand{\hist}{h_t}
\newcommand{\obs}{o_t}
\newcommand{\instr}{x}
\newcommand{\traj}{\tau}
\newcommand{\cand}{\mathcal{A}_K}
\newcommand{\zfeas}{Z_{\mathrm{feas}}}
\newcommand{\zmag}{\overline Z_{\mathrm{feas}}}
\newcommand{\safeset}{\mathcal{G}_x}
\newcommand{\truesupp}{\mathcal{F}_t}
\newcommand{\retained}{\mathcal{G}_t}
\newcommand{\unresolved}{\mathcal{R}_t}
\newcommand{\viableuniv}{\mathcal{V}_t}
\newcommand{\E}{\mathbb{E}}
\newcommand{\Prob}{\mathbb{P}}
\newcommand{\KL}{\mathrm{KL}}
\newcommand{\cost}{C}
\newcommand{\reward}{R}
\newcommand{\score}{S}
\newcommand{\ffilter}{\mathcal{F}_{\mathrm{pred}}}
\newcommand{\sfilter}{\mathcal{S}_{\mathrm{pred}}}
\newcommand{\bcand}{\mathcal{B}_t}

\newcommand{\rankloss}{\Gamma^{\mathrm{cov}}_t}
\newcommand{\rejectloss}{\Gamma^{\mathrm{rej}}_t}
\newcommand{\gateloss}{\Gamma^{\mathrm{gate}}_t}

\newcommand{\trueS}{S^{\star}}
\newcommand{\hatS}{\widehat S}

\DeclareMathOperator*{\argmax}{arg\,max}
\DeclareMathOperator*{\argmin}{arg\,min}

\newtheorem{definition}{Definition}
\newtheorem{proposition}{Proposition}
\newtheorem{theorem}{Theorem}
\newtheorem{corollary}{Corollary}

\title{A Safe Action Is Not Enough: Feasible-Future Decoding for Vision-Language-Action Policies}

\author{%
\begin{minipage}[t]{\dimexpr\textwidth-2\tabcolsep\relax}
\centering\normalfont\normalsize\bfseries
\begin{tabular}[t]{@{}*{3}{>{\centering\arraybackslash}p{0.333\linewidth}@{}}}
Tu Nguyen\textsuperscript{1,}\thanks{Corresponding author:
\href{mailto:tu.nguyen@huawei.com}{\texttt{tu.nguyen@huawei.com}}}
& Matthieu Zimmer\textsuperscript{2}
& Vu Anh Vu\textsuperscript{1}\\[5pt]
Ziyi Wang\textsuperscript{1,3}
& Jannik Hammel Nielsen\textsuperscript{1}
& Xuebing Zhou\textsuperscript{1}\\[5pt]
\multicolumn{3}{c}{Haitham Bou Ammar\textsuperscript{2,4}}
\end{tabular}\endgraf\vspace{10pt}
{\normalfont\small
\textsuperscript{1}Huawei Heisenberg Research Center\qquad
\textsuperscript{2}Huawei Noah's Ark Lab\\[3pt]
\textsuperscript{3}TU Berlin\qquad
\textsuperscript{4}UCL Center for AI}
\end{minipage}%
}

\hypersetup{
    pdftitle={A Safe Action Is Not Enough: Feasible-Future Decoding for Vision-Language-Action Policies}
}
\ifarxivpreprint
  \hypersetup{pdfauthor={Tu Nguyen, Matthieu Zimmer, Vu Anh Vu, Ziyi Wang, Jannik Hammel Nielsen, Xuebing Zhou, Haitham Bou Ammar}}
\else
  \hypersetup{pdfauthor={}}
\fi

\begin{document}

\maketitle
\ifarxivpreprint
  \lhead{Preprint}
  \raggedbottom 
\else
  \lhead{Under review as a conference paper at ICLR 2027}
\fi

\begin{abstract}

A safe action is not necessarily a viable one. A frozen vision-language-action (VLA) policy can favor a locally admissible move that leaves no policy-supported route to safe task completion. We call this the \emph{feasibility--likelihood} gap: likelihood ranks the next move, while feasibility depends on the futures it leaves open.

To bring those futures into the decision, we derive the exact next-block marginal of the history-conditioned policy--environment trajectory law restricted to safe task completion. The derivation reveals a candidate-dependent \emph{feasible-future mass}: its support records whether safe completion remains possible under the frozen continuation process, while its magnitude measures how much weighted safe-completion mass remains. Since exact evaluation is impractical online, we develop a selective finite-candidate approximation and establish conditions for recovering the best retained viable candidate.

Our alarm-triggered, training-free reranker \vicsg{} lowers mean cumulative safety cost by $1.9\%$--$57.5\%$ across six Safety-CHORES settings while remaining within $2.5$ percentage points of policy sampling in success and $0.82$ steps in mean episode length. Our approach offers a promising and practical path toward safer task completion, grounded in an exact policy-relative target yet requiring neither policy retraining nor online rollouts.

\end{abstract}

\begin{figure}[H]
\centering
\includegraphics[width=\linewidth]{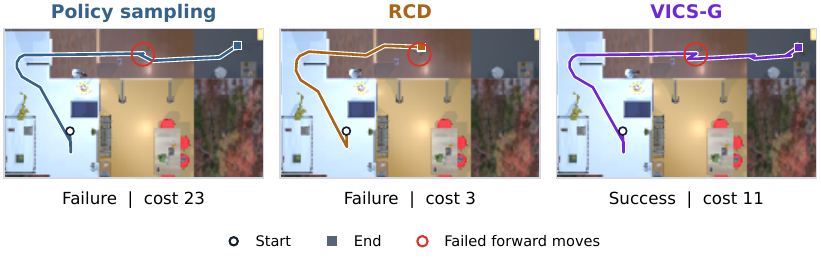}
\caption{\textbf{When progress stalls, find a way forward.}
In a selected ObjectNav episode (\emph{navigate to an apple}), policy sampling
fails at cost $23$, while the local-robustness decoder \rcd{} ends early at cost $3$.
\textbf{\vicsg{} breaks the cycle of failed retries and reaches the goal at cost $11$}.
Circles mark the region of failed forward moves. All three incur safety
cost; the full trajectories and intervention trace appear in
Appendix~\ref{app:qualitative-episodes}.}
\label{fig:frontpage-recovery}
\end{figure}
\ifarxivpreprint
\clearpage
\fi

\section{Introduction}
\label{sec:introduction}

An action can look safe in the moment and still steer the agent into a dead end. Under a frozen vision-language-action (\vla{}) policy, even a likely, locally admissible move may leave no policy-supported route to safe task completion. We call this the \emph{feasibility--likelihood} gap. The navigation example in \Cref{fig:frontpage-recovery} shows why completion belongs in the safety story: reducing cost by ending an episode early can leave the task unfinished.

VLA policies bring broad semantic and motor priors to robot control, learned from large-scale vision--language pretraining and robot demonstrations. Systems such as RT-2~\citep{brohan2023rt2}, PaLM-E~\citep{driess2023palme}, OpenVLA~\citep{kim2024openvla}, Octo~\citep{octo2024}, and $\pi_0$~\citep{black2024pi0} illustrate this breadth. Given an observation $\obs$, instruction $\instr$, and interaction history $\hist$, the policy defines
$$
    a_{t:t+H-1}\sim\policy(\cdot\mid\obs,\instr,\hist).
$$
These priors express what the policy has learned to favor, but do not by themselves establish whether it can finish safely. Deployment-time constrained decoding gives us a way to revisit action selection as execution unfolds, while keeping the underlying policy fixed~\citep{zhang2025safevla,kapoor2025safedec,hsu2021safety}.

Looking ahead in robotics means reasoning about a future that the action alone does not determine. In autoregressive language generation, appending a candidate token fixes the next symbolic prefix~\citep{ji2025inferenceguard}. A robot's next state also depends on the world in which it acts:
$$
    s_{t+H}\sim P(\cdot\mid s_t,a_{t:t+H-1}).
$$
Under partial observability and execution uncertainty, the same observable history and action block can lead to different successor states, with different reachability, visibility, and collision geometry. A safe-completion target must therefore account for both future policy choices and environment dynamics.

We begin with a law over complete trajectories and work back to the next decision. Weighting the joint policy--environment trajectory law by reward and cost, restricting it to safe task completion, and marginalizing over future continuations yields an exact next-block marginal. This marginal brings together the action prior, local reward--cost terms, and the weighted mass of safe, successful continuations---the \emph{feasible-future mass}.

This mass is relative to the policy whose futures we are considering. When it is zero, the frozen policy--environment process assigns no probability to safe completion after that action, even if another controller could still succeed. And even when a safe route remains, the policy may place little mass on the continuations that follow it. We therefore distinguish \emph{support}, which records whether safe completion remains possible, from \emph{magnitude}, which measures how much weighted continuation mass is preserved.

Exact evaluation is generally impractical online, but a decoder may still recover the preferred action from a much smaller comparison. Our selective finite-candidate approximation combines policy likelihood, local robustness, and a bounded continuation term. A gate first decides whether the sampled action is worth revisiting; only then does the decoder rerank a small candidate set. The gate buys efficiency at a price: a dead end it accepts never reaches the reranker. We account for this risk alongside errors in candidate availability, support handling, and approximate scoring, deriving conditions under which the best retained viable candidate can still be recovered.

These ideas lead to \method{} (\emph{Viability-Corrected Sampling}), a training-free reranker that draws on the current observation and interaction history. Across six Safety-CHORES settings~\citep{zhang2025safevla}, \vicsg{}, its generic rollout-free configuration, consistently lowers observed safety cost while keeping task completion and episode length close to policy sampling. Measuring all three together~\citep{fan2026safevlabench} keeps the practical aim in focus: reducing harm while the robot continues toward its goal.

We also ask how this picture changes when execution becomes less predictable, or when the decoder can afford a glimpse ahead. On safety-aligned Fetch, execution feedback yields a favorable observed safety--completion trade-off under perturbed actions, while a matched ablation supports preservation of task success within the preset tolerance. The theory invites us to let sampled futures shape the next action. Even in simulation, however, those futures are costly to generate and only as reliable as the dynamics model behind them. We therefore study \vicsr{}, a modest rollout augmentation of \vicsg{} guided by safe-completion estimates under a limited simulation budget. On Fetch, it improves observed safe success and further reduces cost (\Cref{sec:rollout-study}). Together, these studies test how observed transitions and sampled continuations can guide selective intervention.

Our contributions are threefold:
\begin{enumerate}[
leftmargin=*,
itemsep=2pt,
topsep=2pt,
parsep=0pt,
partopsep=0pt
]
\item
\textbf{An exact next-block target for policy-relative safe completion.}
We derive the next-action marginal of a prior-preserving trajectory law restricted to safe completion. Feasible-future mass connects the current action to the safe continuations it leaves open: support captures whether such continuations remain, while magnitude captures their weighted mass.

\item
\textbf{A theory of selective approximation and its limits.}
We separate intervention, coverage, support, and ranking errors, derive a recovery margin for the best retained viable action, and show when finite continuation rollouts preserve the ideal winner.

\item
\textbf{A practical route to safer execution with frozen policies.}
\vicsg{} combines selective, training-free intervention with consistent observed cost reductions and near-policy task completion across six Safety-CHORES settings. Studies of execution feedback and rollout augmentation test the value of evidence from realized actions and sampled continuations.
\end{enumerate}

\section{Local Safety and Policy-Supported Viability}
\label{sec:preliminaries}

We consider a frozen policy $\policy$ over $H$-action blocks
$a\equiv a_{t:t+H-1}$, conditioned on the observable history
$\mathcal H_t=(\obs,\instr,\hist)$. Executing a block changes the
latent state according to $s_{t+H}\sim P(\cdot\mid s_t,a)$.
Together with the transition kernel $P$ and observation process
$\Omega$, the policy induces a history-conditioned law
$\priorlaw(\traj\mid\mathcal H_t)$ over future trajectories.
Let $\safeset$ denote those trajectories that complete $\instr$
while satisfying the designated hard safety budget.

At inference time, the decoder receives a finite policy-supported
candidate set $\cand=\{a^1,\ldots,a^{N_t}\}$ and returns one of its
members. Here $K$ denotes the proposal budget; retaining the policy
sample alongside top-$K$ actions gives $N_t=|\cand|\le K+1$.

A local robustness decoder combines the policy's preference with
the predicted safety margin of the next block:
\begin{equation}
\label{eq:local-robustness-score}
\score_{\mathrm{loc}}(a)
=
\log \policy(a\mid\mathcal H_t)
+
\lambda_c\rho_C(\widehat\traj_{t:t+H}),
\end{equation}
where $\widehat\traj_{t:t+H}$ is the predicted local trace,
$\lambda_c\ge 0$ weights local robustness, and larger $\rho_C$
indicates greater predicted constraint margin. Likelihood expresses
what the policy favors; local robustness assesses the predicted
next block. Neither alone tells us whether the action leaves a
supported route to safe completion.

\begin{definition}[Feasibility--likelihood gap]
A feasibility--likelihood gap occurs at $\mathcal H_t$ if there
exist $a,b\in\cand$ such that
\[
\log\policy(a\mid\mathcal H_t)
>
\log\policy(b\mid\mathcal H_t),
\]
but
\[
\Prob_{\priorlaw}[\traj\in\safeset\mid\mathcal H_t,a]
<
\Prob_{\priorlaw}[\traj\in\safeset\mid\mathcal H_t,b].
\]
Each probability measures safe task completion after choosing the
candidate, averaging over latent-state uncertainty, environment
outcomes, and subsequent actions of the frozen policy.
The policy favors $a$, yet its own continuation process gives $b$
a higher probability of safe completion.
\end{definition}

Two failure modes make this gap concrete. An \emph{unsafe shortcut}
violates a local constraint. A \emph{safe dead end} passes the local
check but leaves zero probability of safe completion under the
frozen continuation process. A dead end for this policy may still
allow safe task completion under another controller.
The question is how the safe futures each action leaves open
should shape the choice made now.

\section{From Trajectories to Feasible Futures}
\label{sec:method}

Safe completion is a property of a whole trajectory, even though the decoder
must choose one action block at a time. The reference construction follows
standard KL-regularized control and control-as-inference machinery
\citep{todorov2006linearly,toussaint2009robot,levine2018reinforcement}.
We derive the ideal next-action score before tracing how selective decoding
over a finite candidate set can depart from it.
Appendix~\ref{app:reference-target} gives the variational formulation and
regularity conditions, while Appendix~\ref{app:controlled-diagnostics}
verifies the principal identities by exact finite-model enumeration.

\subsection{A feasible-future identity}
\label{sec:suffix-theory}

Safe completion enters the next decision through the prior-preserving reference law
\begin{equation}
\label{eq:global-target}
q^\star(\traj\mid\mathcal H_t)
=
\frac{1}{Z(\mathcal H_t)}
\priorlaw(\traj\mid\mathcal H_t)
\mathbf 1\{\traj\in\safeset\}
\exp\!\left(
\beta\reward_\instr(\traj)-\lambda\cost(\traj)
\right),
\end{equation}
provided $0<Z(\mathcal H_t)<\infty$. The indicator restricts the
reference law to safe task completion; the exponential weight expresses
reward--cost preferences among those trajectories. This is a reference
distribution, not a claim that the deployed decoder solves a global control
problem.

Write a future realization as $(a,\xi)$, where $a\equiv a_{t:t+H-1}$ is the
candidate command block. The residual suffix $\xi$ includes stochastic
execution of $a$ and subsequent latent states, observations, policy actions,
and outcomes. Suffix reward and cost notation suppresses conditioning on
$(\mathcal H_t,a)$. Only history-and-action-measurable terms enter
$\reward_{\mathrm{loc}},\cost_{\mathrm{loc}}$ and are pulled outside the
expectation; random block rewards and collision costs remain inside the
continuation expectation.

\begin{theorem}[Feasible futures in the blockwise marginal]
\label{thm:blockwise-suffix-mass}
Assume
\begin{equation}
\label{eq:prior-factorization}
\priorlaw(a,\xi\mid\mathcal H_t)
=
\policy(a\mid\mathcal H_t)
\,\priorlaw(\xi\mid\mathcal H_t,a),
\end{equation}
and let reward and cost decompose as
\begin{align}
\reward_\instr(a,\xi)
&=
\reward_{\mathrm{loc}}(\mathcal H_t,a)
+
\reward_{\mathrm{suf}}(\xi,\instr),\\
\cost(a,\xi)
&=
\cost_{\mathrm{loc}}(\mathcal H_t,a)
+
\cost_{\mathrm{suf}}(\xi).
\end{align}
For cumulative constraints, the remaining budget is included in
$\mathcal H_t$; Appendix~\ref{app:formal-robot-decoding} gives the
augmented-state form. Then the next-block marginal of
\Cref{eq:global-target} is
\begin{equation}
\label{eq:blockwise-target-main}
q^\star(a\mid\mathcal H_t)
\propto
\policy(a\mid\mathcal H_t)
\exp\!\left(
\beta\reward_{\mathrm{loc}}(\mathcal H_t,a)
-
\lambda\cost_{\mathrm{loc}}(\mathcal H_t,a)
\right)
\zfeas(\mathcal H_t,a),
\end{equation}
where
\begin{equation}
\label{eq:suffix-mass}
\zfeas(\mathcal H_t,a)
=
\E_{\xi\sim\priorlaw(\cdot\mid\mathcal H_t,a)}
\left[
\mathbf 1\{(a,\xi)\in\safeset\}
\exp\!\left(
\beta\reward_{\mathrm{suf}}(\xi,\instr)
-
\lambda\cost_{\mathrm{suf}}(\xi)
\right)
\right].
\end{equation}
\end{theorem}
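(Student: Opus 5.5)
The plan is to compute the next-block marginal directly, by integrating the reference law \Cref{eq:global-target} over the residual suffix $\xi$. Working with a dominating measure (counting measure for finite candidate sets, or the reference measure under which $\priorlaw$ has a density, as in Appendix~\ref{app:reference-target}), I would write
\[
q^\star(a\mid\mathcal H_t)=\frac{1}{Z(\mathcal H_t)}\int \priorlaw(a,\xi\mid\mathcal H_t)\,\mathbf 1\{(a,\xi)\in\safeset\}\exp\!\bigl(\beta\reward_\instr(a,\xi)-\lambda\cost(a,\xi)\bigr)\,d\xi .
\]
The first substantive step is to insert the factorization \Cref{eq:prior-factorization}. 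This pulls $\policy(a\mid\mathcal H_t)$ outside the integral and leaves integration against the conditional law $\priorlaw(\xi\mid\mathcal H_t,a)$.

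Next I substitute the additive decompositions of $\reward_\instr$ and $\cost$. The exponential then splits into a product. One factor is $\exp(\beta\reward_{\mathrm{loc}}(\mathcal H_t,a)-\lambda\cost_{\mathrm{loc}}(\mathcal H_t,a))$, which is measurable in $(\mathcal H_t,a)$ and so constant in $\xi$; I pull it out as well. The other factor depends on the suffix. What remains under the integral is exactly the expectation defining $\zfeas(\mathcal H_t,a)$ in \Cref{eq:suffix-mass}. The indicator stays inside, because membership in $\safeset$ depends on the whole trajectory $(a,\xi)$.

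Proportionality then follows because $Z(\mathcal H_t)$ does not depend on $a$. As a consistency check, summing or integrating the unnormalized right-hand side over $a$ gives back $Z(\mathcal H_t)$ by Tonelli, since all terms are nonnegative. So the hypothesis $0<Z<\infty$ guarantees that $\zfeas(\mathcal H_t,a)<\infty$ for $\policy$-almost every $a$, and that the normalizer is well defined.

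The step I expect to be the main obstacle is the cumulative constraint. A budget such as $\cost\le d$ is not a product of a local event and a suffix event, so the indicator seems not to factor. My answer is that it does not need to factor: I keep the full indicator $\mathbf 1\{(a,\xi)\in\safeset\}$ inside $\zfeas$. Following the augmented-state form in Appendix~\ref{app:formal-robot-decoding}, I would include the remaining budget in $\mathcal H_t$. This ensures that $\priorlaw(\xi\mid\mathcal H_t,a)$ and the local terms are well defined and measurable with respect to $(\mathcal H_t,a)$.

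A second point needs care. Stochastic execution of $a$, such as random collision costs, belongs in $\xi$ and not in $\cost_{\mathrm{loc}}$, which matches the convention stated before the theorem. I would also verify that the product-form disintegration is legitimate on general spaces, for example Polish spaces with a regular conditional distribution. Given the factorization assumption, this is routine.
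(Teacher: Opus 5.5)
Your proposal is correct and follows the paper's proof essentially step for step: write the tilted law on $(a,\xi)$, insert the factorization, use the additive decompositions to pull the $(\mathcal H_t,a)$-measurable local factor out of the $\xi$-integral, and recognize what remains, with the full indicator $\mathbf 1\{(a,\xi)\in\safeset\}$ kept inside, as $\zfeas(\mathcal H_t,a)$. Your extra remarks (Tonelli for the normalizer, the remaining-budget augmentation for cumulative constraints, and regular conditional distributions) are sound details that the paper leaves implicit.
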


\paragraph{Interpretation.}
After the future is marginalized out, its influence remains as
feasible-future mass alongside the frozen action prior and local
reward--cost contribution. This mass integrates latent-state uncertainty, stochastic execution, future
observations, and future policy actions under the frozen continuation
process. When $\beta=\lambda=0$, this factor is simply the conditional
probability of safe task completion after the candidate. In particular, $\zfeas(\mathcal H_t,a)=0$ means that this process
assigns no mass to safe task completion after $a$; it does not mean that every
controller would fail from the resulting state. Taking the logarithm of the
unnormalized marginal gives the ideal score defined later in
\Cref{eq:ideal-total-score}.

\paragraph{Support and magnitude.}
Safe completion can remain possible even when the policy is unlikely to
find its way there. We therefore separate two questions: does safe completion have positive
probability under the frozen continuation process, and, when it does, how much
weighted continuation mass remains? Define
\begin{equation}
\label{eq:support-magnitude}
\chi_{\mathrm{feas}}^\star(\mathcal H_t,a)
=
\mathbf 1\{\zfeas(\mathcal H_t,a)>0\},
\qquad
\zmag^\star(\mathcal H_t,a)
=
\begin{cases}
\zfeas(\mathcal H_t,a), & \zfeas(\mathcal H_t,a)>0,\\
1, & \zfeas(\mathcal H_t,a)=0,
\end{cases}
\end{equation}
so that
\begin{equation}
\zfeas(\mathcal H_t,a)
=
\chi_{\mathrm{feas}}^\star(\mathcal H_t,a)
\,\zmag^\star(\mathcal H_t,a).
\end{equation}
The value $1$ off support is a neutral convention. Support is categorical;
magnitude is positive and graded on the support. A finite score can make a
suspected dead end unlikely, but it cannot reproduce the literal zero, or the
$-\infty$ log score, of an action outside the ideal support.

\paragraph{Safe dead ends.}
Two candidates can have equal likelihood and identical local reward--cost
terms while one has positive feasible-future mass and the other has none.
Every local-only score is then indifferent, whereas the ideal blockwise rule
assigns the dead end zero mass. Appendix~\ref{app:proof-safe-dead-end} states
this separation formally.

\paragraph{An exact finite witness.}

\Cref{fig:exact-finite-witness} follows four candidates: an unsafe shortcut
$a_U$, a locally admissible dead end $a_D$, and feasible actions $a_L,a_H$
with lower and higher continuation magnitude. Each factor changes the winner;
Appendix~\ref{app:controlled-diagnostics} gives the exact construction.

\begin{figure}[htbp]
    \centering
    \includegraphics[width=0.94\linewidth]{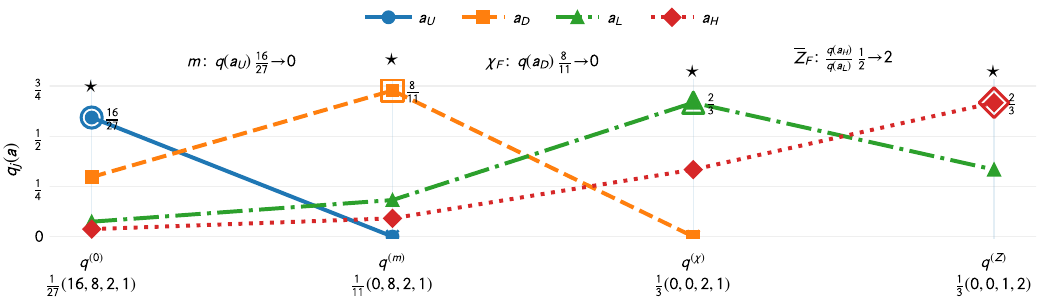}
    \caption{\textbf{Local safety is not enough.}
    Local admissibility removes the unsafe shortcut $a_U$, but only future
    support removes the safe dead end $a_D$; among viable actions, feasible-future
    magnitude then favors $a_H$ over $a_L$. The sequence
    $a_U\!\to a_D\!\to a_L\!\to a_H$ shows successive ideal scoring factors,
    not physical time.}
    \label{fig:exact-finite-witness}
\end{figure}

The witness also exposes a limit of selective decoding: a gate that accepts
$a_D$ leaves the viable alternatives unexamined.

\subsection{Selective finite candidates: four approximation interfaces}
\label{sec:finite-candidate-theory}

The ideal score tells us what to prefer; in practice, however, the decoder
sees only a few candidates and an imperfect picture of their futures.
We trace the path to execution through four approximation interfaces:
\textbf{intervention}, whether authorized reranking determines the executed
action; \textbf{coverage}, which viable alternatives survive proposal and
admission; \textbf{support}, whether viable actions are rejected or dead ends
remain eligible; and \textbf{ranking}, how accurately the score orders viable
survivors. We first analyze recovery within the reranking branch, then account
for retention of the policy sample.

Let $a_{\mathrm{pol}}\in\cand$ be the policy sample and $a_0$ the
highest-likelihood candidate; they need not coincide. On intervention, the
admitted set combines a likelihood trust region with shared safety and
task-feasibility checks:
\begin{equation}
\label{eq:practical-candidate-set}
\bcand
=
\cand\cap\mathcal T_\tau(a_0)\cap\sfilter\cap\ffilter.
\end{equation}
Here $\mathcal T_\tau(a_0)$ limits the policy log-probability gap
from $a_0$, and $\sfilter,\ffilter$ are predicted-safety and task-feasibility
filters (Appendix~\ref{app:decoder-procedure}). These checks do not determine
the true viable set.

The true policy-relative viable subset of $\bcand$ is
\begin{equation}
\label{eq:true-feasible-support}
\truesupp
=
\{a\in\bcand:\ \zfeas(\mathcal H_t,a)>0\}.
\end{equation}
Let $\mathcal D_t\subseteq\bcand$ be the candidates rejected by a practical
support rule using its available pre-action information. This set need not
equal the true dead-end set
$\bcand\setminus\truesupp
 =\{a\in\bcand:\zfeas(\mathcal H_t,a)=0\}$. Define
\begin{equation}
\label{eq:partial-support-sets}
\widehat{\mathcal F}_t=\bcand\setminus\mathcal D_t,
\qquad
\retained=\truesupp\cap\widehat{\mathcal F}_t,
\qquad
\unresolved=\widehat{\mathcal F}_t\setminus\truesupp.
\end{equation}
Thus $\widehat{\mathcal F}_t$ is the set available for reranking:
$\retained$ contains its viable members, and $\unresolved$ contains its
remaining dead ends. The decoder constructs $\widehat{\mathcal F}_t$ from
pre-action rules; the split into $\retained$ and $\unresolved$ is used only
for analysis. We do not assume that the decoder knows which surviving
candidates truly support safe completion.

Taking the logarithm of the unnormalized marginal in
\Cref{thm:blockwise-suffix-mass}, define the extended-real ideal score
\begin{equation}
\label{eq:ideal-total-score}
\trueS(a)
=
\log\policy(a\mid\mathcal H_t)
+
\beta\reward_{\mathrm{loc}}(\mathcal H_t,a)
-
\lambda\cost_{\mathrm{loc}}(\mathcal H_t,a)
+
\log\zfeas(\mathcal H_t,a),
\end{equation}
with $\log0=-\infty$. Maximizing $\trueS$ therefore gives the exact
next-block MAP rule induced by \Cref{thm:blockwise-suffix-mass}. Let
$\hatS(a)$ be any finite practical score on $\widehat{\mathcal F}_t$.
Whenever $\retained\neq\emptyset$, define the practical support-separation
margin
\begin{equation}
\label{eq:practical-support-separation}
\Delta_t^{\mathrm{sup}}
=
\max_{g\in\retained}\hatS(g)
-
\max_{r\in\unresolved}\hatS(r).
\end{equation}
The second maximum is $-\infty$ when $\unresolved=\emptyset$.
Thus $\Delta_t^{\mathrm{sup}}>0$ ensures that a retained viable action outranks
every remaining dead end. Rejection of viable actions is accounted for
separately by the loss $\rejectloss$ below.

The next theorem bounds the loss relative to the best retained viable
candidate and identifies when that candidate is recovered exactly.
Only score differences determine the ranking, so a candidate-independent
offset does not affect the recovery guarantee.

\begin{theorem}[Selective finite-candidate recovery under partial support]
\label{thm:candidate-ranking-consistency}
Let $\viableuniv$ be a finite policy-supported viable comparison set
containing $\truesupp$ and, whenever $a_{\mathrm{pol}}$ is viable,
the retained policy sample $a_{\mathrm{pol}}$.
Because all decision sets are finite, the maxima below are attained. Assume
$\retained\neq\emptyset$, $\Delta_t^{\mathrm{sup}}>0$, and
\begin{equation}
\label{eq:pairwise-total-score-error}
\left|
\bigl(\hatS(a)-\hatS(b)\bigr)
-
\bigl(\trueS(a)-\trueS(b)\bigr)
\right|
\le\varepsilon
\qquad
\forall a,b\in\retained.
\end{equation}
Let
\[
\widehat a=\argmax_{a\in\widehat{\mathcal F}_t}\hatS(a),
\quad
a_{\mathcal G}^\star=\argmax_{a\in\retained}\trueS(a),
\quad
a_{\mathcal F}^\star=\argmax_{a\in\truesupp}\trueS(a),
\quad
a_{\mathcal V}^\star=\argmax_{a\in\viableuniv}\trueS(a),
\]
under a common deterministic tie rule. Then $\widehat a\in\retained$ and
\begin{equation}
\label{eq:ranking-regret-bound}
\trueS(a_{\mathcal G}^\star)-\trueS(\widehat a)
\le\varepsilon.
\end{equation}
If the ideal margin of $a_{\mathcal G}^\star$ over the second-best action in
$\retained$ exceeds $\varepsilon$, then
$\widehat a=a_{\mathcal G}^\star$; take the margin as $+\infty$ when
$\retained$ is a singleton. Moreover,
\begin{equation}
\label{eq:coverage-ranking-decomposition}
\trueS(a_{\mathcal V}^\star)-\trueS(\widehat a)
\le
\underbrace{\trueS(a_{\mathcal V}^\star)-\trueS(a_{\mathcal F}^\star)}_{\rankloss\ \text{(proposal/admission coverage)}}
+
\underbrace{\trueS(a_{\mathcal F}^\star)-\trueS(a_{\mathcal G}^\star)}_{\rejectloss\ \text{(false rejection)}}
+
\varepsilon.
\end{equation}
\end{theorem}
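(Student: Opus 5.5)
The plan is to argue in three stages, each a short finite-set comparison.

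\textbf{Stage 1: $\widehat a\in\retained$.} Since $\widehat{\mathcal F}_t=\retained\cup\unresolved$ is a disjoint union and $\Delta_t^{\mathrm{sup}}>0$, the practical maximum over $\retained$ strictly exceeds every $\hatS(r)$ with $r\in\unresolved$. This also covers the convention $-\infty$ when $\unresolved=\emptyset$. Hence the global practical maximizer over $\widehat{\mathcal F}_t$ lies in $\retained$, whatever the tie rule. All members of $\retained\subseteq\truesupp$ have $\zfeas>0$, so $\trueS$ is finite there. Subtracting scores in the later stages is therefore legitimate.

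\textbf{Stage 2: the regret bound and exact recovery.} I will apply \Cref{eq:pairwise-total-score-error} to the pair $(a_{\mathcal G}^\star,\widehat a)$, both of which lie in $\retained$. Optimality of $\widehat a$ for $\hatS$ on $\retained$ gives $\hatS(a_{\mathcal G}^\star)-\hatS(\widehat a)\le 0$. Then
\[
\trueS(a_{\mathcal G}^\star)-\trueS(\widehat a)\le \bigl(\hatS(a_{\mathcal G}^\star)-\hatS(\widehat a)\bigr)+\varepsilon\le\varepsilon,
\]
which is \Cref{eq:ranking-regret-bound}.

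For exact recovery, suppose $\widehat a\neq a_{\mathcal G}^\star$. Then $\widehat a$ is some other member of $\retained$, so $\trueS(a_{\mathcal G}^\star)-\trueS(\widehat a)$ is at least the ideal margin over the second-best action, which exceeds $\varepsilon$. This contradicts the regret bound. In the singleton case, $\retained=\{a_{\mathcal G}^\star\}$ and Stage 1 forces equality directly. Only differences of scores are used throughout, so candidate-independent offsets are irrelevant.

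\textbf{Stage 3: the decomposition.} I will telescope
\[
\trueS(a_{\mathcal V}^\star)-\trueS(\widehat a)=\bigl[\trueS(a_{\mathcal V}^\star)-\trueS(a_{\mathcal F}^\star)\bigr]+\bigl[\trueS(a_{\mathcal F}^\star)-\trueS(a_{\mathcal G}^\star)\bigr]+\bigl[\trueS(a_{\mathcal G}^\star)-\trueS(\widehat a)\bigr]
\]
and bound the last bracket by $\varepsilon$ using Stage 2.

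The step needing care is that every term is a finite real number, so the telescoping does not involve $\infty-\infty$. The argument:
\begin{itemize}
\item $\retained\subseteq\truesupp\subseteq\viableuniv$, and all three sets are nonempty because $\retained\neq\emptyset$.
\item Since $\viableuniv$ is finite and consists of viable actions, $\trueS<\infty$ on it; since it is policy-supported, $\log\policy>-\infty$ there.
\item The maximizers are nested: $\trueS(a_{\mathcal V}^\star)\ge\trueS(a_{\mathcal F}^\star)\ge\trueS(a_{\mathcal G}^\star)$.
\end{itemize}
Consequently $\rankloss$ and $\rejectloss$ are nonnegative finite quantities, and the decomposition \Cref{eq:coverage-ranking-decomposition} follows.

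I expect this extended-real bookkeeping to be the main obstacle, more than any inequality. It requires stating explicitly that viability together with policy support makes $\trueS$ finite on $\viableuniv$. It also requires that the local reward and cost terms are finite, which follows from the regularity conditions assumed in Appendix~\ref{app:reference-target}. Finally, the common deterministic tie rule makes all the argmaxes well defined, and the containment of $a_{\mathrm{pol}}$ plays no role beyond making $\viableuniv$ a legitimate benchmark.
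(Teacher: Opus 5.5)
Your proof is correct and follows the paper's argument step for step: the disjoint split $\widehat{\mathcal F}_t=\retained\cup\unresolved$ together with $\Delta_t^{\mathrm{sup}}>0$ places $\widehat a$ in $\retained$; the pairwise error bound applied to $(a_{\mathcal G}^\star,\widehat a)$ gives the $\varepsilon$ regret and, by contradiction, top-one recovery; and the three-term telescoping gives the decomposition. Your extended-real bookkeeping is a small addition that the paper leaves implicit, but you have the directions swapped in one place: viability together with policy support is what rules out $\trueS=-\infty$ on $\viableuniv$, whereas $\trueS<\infty$ comes from bounded reward and cost, which make $\zfeas$ and the local terms finite.
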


The three reference actions make the decomposition concrete:
$a_{\mathcal V}^\star$ is best in the viable comparison set,
$a_{\mathcal F}^\star$ is best after proposal and admission, and
$a_{\mathcal G}^\star$ is best after support rejection. The nested sets
$\retained\subseteq\truesupp\subseteq\viableuniv$ make the first two losses
nonnegative: they measure opportunities lost before ranking begins.
The remaining ranking loss within $\retained$ is bounded by $\varepsilon$.

The bound concerns the ideal action score at the current history.
Support separation, $\Delta_t^{\mathrm{sup}}>0$, ensures that the practical
winner is viable even when some dead ends remain eligible. The gate and
final authorization separately determine whether that winner is executed.

\paragraph{Selective intervention.}
For \vicsg{}, let $g_t\in\{0,1\}$ indicate whether the initial gate invokes
reranking. Set $j_t=1$ exactly when $g_t=1$, the surviving set is nonempty,
and final replacement authorization accepts $\widehat a$; otherwise set
$j_t=0$. The deployed action is
\begin{equation}
\label{eq:selective-deployed-action}
a_t^{\mathrm{dep}}
=
\begin{cases}
\widehat a, & j_t=1,\\
a_{\mathrm{pol}}, & j_t=0.
\end{cases}
\end{equation}
For nonempty $\viableuniv$, define
$\mathcal L_t^{\mathrm{dep}}
 =\trueS(a_{\mathcal V}^\star)-\trueS(a_t^{\mathrm{dep}})$ and
\begin{equation}
\label{eq:gate-loss}
\gateloss
=
\trueS(a_{\mathcal V}^\star)-\trueS(a_{\mathrm{pol}})\in[0,\infty].
\end{equation}
The selective procedure gives
\begin{align}
\mathcal L_t^{\mathrm{dep}}
&=
\begin{cases}
\gateloss, & j_t=0,\\
\trueS(a_{\mathcal V}^\star)-\trueS(\widehat a), & j_t=1,
\end{cases}
\label{eq:selective-regret-decomposition}\\
\mathcal L_t^{\mathrm{dep}}
&\le
\rankloss+\rejectloss+\varepsilon,
\qquad j_t=1.
\label{eq:selective-reranking-bound}
\end{align}
The first line is an identity; the bound applies only when $j_t=1$ and the
assumptions of \Cref{thm:candidate-ranking-consistency} hold. The $j_t=0$ branch covers initial
gate acceptance, empty-set fallback, and final authorization veto. Its loss
$\gateloss$ can be infinite: if
$\zfeas(\mathcal H_t,a_{\mathrm{pol}})=0$, then
$\trueS(a_{\mathrm{pol}})=-\infty$. Reranking can identify a better choice
without changing what the robot does; triggering it alone therefore gives
no bound on the executed action.

\subsection{\texorpdfstring{\method{}: Selective Feasible-Future Reranking}{VICS: Selective Feasible-Future Reranking}}
\label{sec:vics-algorithm}

As failed attempts accumulate in the observed transition history, a move
can lose its promise without losing the policy's confidence. \method{}
(\emph{Viability-Corrected Sampling}) acts on this evidence selectively,
preserving the policy sample unless the gate invokes reranking and a
replacement is authorized.
Within this framework, \vicsg{} supplies the generic ranking, and \vicss{}
introduces task-grounded rules for selecting among the alternatives. The
pre-action interface $\mathcal I_t=(\obs,\instr,\hist,m_t)$ includes monitor
information $m_t$: recent execution outcomes, simulator-provided safety
costs, object visibility, and target distance.

\paragraph{\vicsg{} (Generic VICS).}
On intervention, \vicsg{} ranks admitted candidates by

\begin{equation}
\label{eq:vics-g-score}
S_G(a)
=
\log\policy(a\mid\mathcal H_t)
+
\lambda_c\widehat\rho_{\mathrm{loc}}(\mathcal I_t,a)
+
\eta_v z_G(\mathcal I_t,a).
\end{equation}

The frozen prior anchors the score in the policy's task preference, while
local robustness and the bounded continuation term give the decoder reasons
to revise the ranking. Local robustness concerns risk in the next block;
continuation evidence bears on what may remain possible afterward.

The theoretical comparison is with the full ideal score in
\Cref{eq:ideal-total-score}. For retained viable candidates, the relevant
error is the discrepancy in score differences, including both the local
reward--cost and continuation contributions
(Appendix~\ref{app:score-achievability}). The ideal margin determines how much
approximation error this ordering can absorb while retaining the same best
viable candidate. The evaluated scores are specified in
Appendix~\ref{app:generic-specification}.

\paragraph{Approximating the future term.}
\vicsg{} uses fixed features of the current monitor and recent execution
history as an inexpensive proxy for future viability. The score is not a calibrated estimate of
$\log\zmag^\star$ and does not certify support, so the recovery guarantee
remains conditional on the total-score error in
\Cref{eq:pairwise-total-score-error}; Appendix~\ref{app:score-achievability}
separates the local and continuation terms.

When the continuation process can be branched, the decoder can follow a
candidate farther into the future before committing to it.
\Cref{cor:rollout-achievability} gives sufficient conditions under which
independent samples from the frozen continuation law preserve the ideal
score maximizer. Our simulator-assisted extension, \vicsr{}, explores this
idea using continuations under \vicsg{} from a reconstructed simulator state
(\Cref{sec:rollout-study}). Its continuation law and selection rule differ
from the exact target; Appendix~\ref{app:rollout-study} specifies the
estimated quantity and protocol. A learned safe-completion critic offers
another possible approximation, which we leave to future work.

\paragraph{\vicss{} (Symbolic-support VICS).}
Which actions enter the comparison can matter as much as how they are ranked.
In its canonical form, \vicss{} gives the support--magnitude distinction a
direct role in selection: symbolic knowledge constrains eligibility, while
the generic score orders the candidates that remain. A partial support rule
then gives
\begin{equation}
\label{eq:vics-s-set}
\bcand^S=\{a\in\bcand:\widehat\chi_S(\mathcal I_t,a)=1\},
\qquad
a_S=\argmax_{a\in\bcand^S}S_G(a).
\end{equation}
Here $\widehat\chi_S=1$ indicates symbolic eligibility, and the maximizer is
taken when $\bcand^S\ne\emptyset$. Symbolic eligibility remains a partial
view of support: a viable action may be rejected, and an admitted action may
still be a dead end. The false-rejection loss and support-separation
condition in \Cref{thm:candidate-ranking-consistency} account for these two
possibilities. Appendix~\ref{app:decoder-procedure} specifies the evaluated
variants and their departures from this canonical form.

\paragraph{Execution rule.}
For \vicsg{}, the gate invokes reranking when the sample repeats a failed
or collision-linked action, fails terminal admission, or has an explicit
hard prohibition. An empty surviving set or final authorization veto retains
the sample.
Algorithm~\ref{alg:vics-full} gives the full procedure, including the symbolic
terminal boundary.

\paragraph{Comparator and zero-weight ablation.}
The \rcd{} selection rule and the zero-weight \vicsg{} proposal are
inspired by SafeDec~\citep{kapoor2025safedec} and adapted to our evaluation's
policy interface and safety monitors. The baseline uses $K=12$ in the main
and feedback studies, and selects
\begin{equation}
\label{eq:rcd-vics-zero-contrast}
\begin{aligned}
a_{\mathrm{RCD}}
&=\argmax_{a\in\mathcal A_{\mathrm{RCD}}}S_{\mathrm{RCD}}(a),\\
\widehat a_{G,0}
&=\argmax_{a\in\bcand}
\left[
\log\policy(a\mid\mathcal H_t)
+\lambda_c\widehat\rho_{\mathrm{loc}}(\mathcal I_t,a)
\right].
\end{aligned}
\end{equation}
Here $\mathcal A_{\mathrm{RCD}}$ and $S_{\mathrm{RCD}}$ denote the
baseline's candidate set and local-robustness score. The second line
applies when reranking is invoked and $\bcand\ne\emptyset$;
execution still requires replacement authorization.
Setting $\eta_v=0$ removes only the $z_G$ correction, retaining
\vicsg{}'s gate, trust region, filters, history-dependent local
score, and fallback.
With the rest of the decoder held fixed, this ablation asks whether the
continuation correction earns its place in the score.

\section{Experimental Evaluation}
\label{sec:experiments}

A lower cost can mark a safer path to the goal, or an episode that ends
before the robot gets there. We therefore read safety cost alongside task
completion and episode length. The six task/checkpoint comparisons assess the complete
decoder under realistic operating conditions; matched studies then isolate
the incremental value of the continuation correction and execution feedback.
A separate rollout ablation asks whether looking ahead
can further improve the safety--completion trade-off.

\subsection{Evaluation protocol}
\label{sec:evaluation-contract}

\paragraph{Benchmark and policies.}
Safety-CHORES~\citep{zhang2025safevla} evaluates safety-critical mobile
manipulation in procedurally generated indoor environments. We use its PickUp,
ObjectNav, and Fetch task families with both task-optimized and safety-aligned
SafeVLA checkpoints. All methods use the same episode sets and seeds within
each task/checkpoint setting: $160$ PickUp, $200$ ObjectNav, and $172$ Fetch
episodes per method. Policy weights remain fixed throughout.

For the generic \vicsg{} decoder, we use one shared configuration across all
six settings, selected using approximately $10\%$ of episodes per task
family; the reported results cover the full episode sets. The main
comparisons evaluate complete decoders. The zero-weight ablation retains the
recovery architecture and isolates the continuation correction, feedback
masking isolates the additional execution channel, and \emph{Switch after
failure} tests a simple retry alternative. These matched and supplementary
controls clarify how individual components contribute to the overall result.

\paragraph{Methods and measures.}

We compare policy sampling (no reranking), \rcd{}, \vicsg{}, and \vicss{}.
All reranking methods use $K=12$ unless otherwise stated; full
configurations are reported in Appendix~\ref{app:decoder-procedure}. We
report success (rate and count), mean cumulative cost, violations, and
episode length. Appendices~\ref{app:feedback-ablation-contract}
and~\ref{app:eta-sensitivity} give paired feedback and weight analyses.
The rollout ablation uses fresh baselines in a modified simulator
(\Cref{sec:rollout-study}; Appendix~\ref{app:rollout-study}).
The six-setting results are descriptive; feedback tests are prespecified,
while the weight study is exploratory.

\subsection{Main results}
\label{sec:main-results}

\begin{table*}[t]
\centering
\small
\setlength{\tabcolsep}{3.6pt}
\renewcommand{\arraystretch}{1.06}
\arrayrulecolor{TableRule}
\caption{\textbf{\vicsg{} consistently lowers safety cost while
retaining near-policy success and exposure on Safety-CHORES.} Success is rate
(count/$N$); $\Delta$S and cost reduction are relative to policy sampling.
Steps report exposure. \vicss{} denotes task-specific diagnostic profiles
(Appendix~\ref{app:decoder-procedure}). Values are descriptive aggregates.}
\label{tab:main-results}
\begin{tabular}{@{}llrrrrrr@{}}
\toprule
\rowcolor{TableHead}
Setting & Decoder & Success $\uparrow$ & $\Delta$S (pp) & Cost $\downarrow$ & Cost red. $\uparrow$ & Viol. $\downarrow$ & Steps \\
\midrule
PickUp / base
& Policy sample & .894 (143/160) & -- & 1.656 & -- & 1.656 & 41.78 \\
& \rcd{} & .775 (124/160) & \textcolor{modernred}{$-11.9$} & .931 & \textcolor{moderngreen}{43.8\%} & .906 & 33.66 \\
\rowcolor{oursbg}
& \textbf{\textcolor{ModernPurpleDark}{\vicsg{}}} & .894 (143/160) & $0.0$ & 1.625 & \textcolor{moderngreen}{1.9\%} & 1.625 & 41.83 \\
\rowcolor{oursbg}
& \textbf{\textcolor{ModernPurpleDark}{\vicss{}}} & .900 (144/160) & \textcolor{moderngreen}{$+0.6$} & 1.175 & \textcolor{moderngreen}{29.1\%} & 1.175 & 40.09 \\
\addlinespace[2pt]
PickUp / safe
& Policy sample & .925 (148/160) & -- & .400 & -- & .400 & 45.19 \\
& \rcd{} & .850 (136/160) & \textcolor{modernred}{$-7.5$} & .256 & \textcolor{moderngreen}{36.0\%} & .244 & 40.76 \\
\rowcolor{oursbg}
& \textbf{\textcolor{ModernPurpleDark}{\vicsg{}}} & .931 (149/160) & \textcolor{moderngreen}{$+0.6$} & .344 & \textcolor{moderngreen}{14.1\%} & .344 & 45.95 \\
\rowcolor{oursbg}
& \textbf{\textcolor{ModernPurpleDark}{\vicss{}}} & .925 (148/160) & $0.0$ & .344 & \textcolor{moderngreen}{14.1\%} & .344 & 47.01 \\
\addlinespace[2pt]
ObjectNav / base
& Policy sample & .780 (156/200) & -- & 7.965 & -- & 5.410 & 92.23 \\
& \rcd{} & .675 (135/200) & \textcolor{modernred}{$-10.5$} & .930 & \textcolor{moderngreen}{88.3\%} & .680 & 71.22 \\
\rowcolor{oursbg}
& \textbf{\textcolor{ModernPurpleDark}{\vicsg{}}} & .755 (151/200) & \textcolor{ChangeOrange}{$-2.5$} & 3.385 & \textcolor{moderngreen}{57.5\%} & 2.380 & 92.32 \\
\rowcolor{oursbg}
& \textbf{\textcolor{ModernPurpleDark}{\vicss{}}} & .770 (154/200) & \textcolor{ChangeOrange}{$-1.0$} & 4.195 & \textcolor{moderngreen}{47.3\%} & 2.835 & 91.38 \\
\addlinespace[2pt]
ObjectNav / safe
& Policy sample & .850 (170/200) & -- & 1.120 & -- & .995 & 71.77 \\
& \rcd{} & .815 (163/200) & \textcolor{modernred}{$-3.5$} & .255 & \textcolor{moderngreen}{77.2\%} & .165 & 64.39 \\
\rowcolor{oursbg}
& \textbf{\textcolor{ModernPurpleDark}{\vicsg{}}} & .850 (170/200) & $0.0$ & .855 & \textcolor{moderngreen}{23.7\%} & .720 & 72.02 \\
\rowcolor{oursbg}
& \textbf{\textcolor{ModernPurpleDark}{\vicss{}}} & .840 (168/200) & \textcolor{ChangeOrange}{$-1.0$} & 1.005 & \textcolor{moderngreen}{10.3\%} & .920 & 71.39 \\
\addlinespace[2pt]
Fetch / base
& Policy sample & .488 (84/172) & -- & 15.052 & -- & 10.977 & 133.81 \\
& \rcd{} & .308 (53/172) & \textcolor{modernred}{$-18.0$} & 2.965 & \textcolor{moderngreen}{80.3\%} & 2.256 & 96.34 \\
\rowcolor{oursbg}
& \textbf{\textcolor{ModernPurpleDark}{\vicsg{}}} & .512 (88/172) & \textcolor{moderngreen}{$+2.3$} & 10.779 & \textcolor{moderngreen}{28.4\%} & 7.843 & 132.99 \\
\rowcolor{oursbg}
& \textbf{\textcolor{ModernPurpleDark}{\vicss{}}} & .523 (90/172) & \textcolor{moderngreen}{$+3.5$} & 8.122 & \textcolor{moderngreen}{46.0\%} & 6.424 & 135.85 \\
\addlinespace[2pt]
Fetch / safe
& Policy sample & .570 (98/172) & -- & 3.465 & -- & 2.680 & 114.38 \\
& \rcd{} & .366 (63/172) & \textcolor{modernred}{$-20.3$} & 1.744 & \textcolor{moderngreen}{49.7\%} & 1.355 & 108.44 \\
\rowcolor{oursbg}
& \textbf{\textcolor{ModernPurpleDark}{\vicsg{}}} & .558 (96/172) & \textcolor{ChangeOrange}{$-1.2$} & 3.006 & \textcolor{moderngreen}{13.3\%} & 2.471 & 114.10 \\
\rowcolor{oursbg}
& \textbf{\textcolor{ModernPurpleDark}{\vicss{}}} & .587 (101/172) & \textcolor{moderngreen}{$+1.7$} & 4.791 & \textcolor{modernred}{$-38.3\%$} & 3.797 & 125.71 \\
\bottomrule
\end{tabular}
\arrayrulecolor{black}
\end{table*}

\paragraph{Lower safety cost across all six settings at near-policy completion.}
\vicsg{} lowers observed mean safety cost by $1.9\%$--$57.5\%$ and records
fewer violations in every setting, spanning both task-optimized and
safety-aligned checkpoints. Success stays within $2.5$ percentage points of
policy sampling, and mean episode length remains within $0.82$ steps.
The lower cost thus accompanies sustained task progress.

\paragraph{Task-grounded rules yield complementary gains.}
\vicss{} reaches its strongest joint gain on base Fetch: success rises by
$3.5$ points, cost falls by $46.0\%$, and violations fall by $41.5\%$.
It matches or improves policy success in four settings and lowers cost
in five by $10.3\%$--$47.3\%$. Safety-aligned Fetch marks the trade-off:
highest success ($58.7\%$) comes with higher cost and longer episodes.
These task-specific profiles complement the consistent observed cost
reductions of the generic decoder.

\paragraph{Task progress distinguishes the operating points.}
\rcd{} attains lower raw cost, but its success falls by $3.5$--$20.3$
percentage points and its episodes shorten by $4.4$--$37.5$ steps relative
to policy sampling. Appendix~\ref{app:qualitative-episodes} illustrates these
cost--completion trade-offs through navigation recovery and a successful
grasp of the requested object.

\paragraph{The continuation correction adds setting-dependent gains.}
At $\eta_v=0.10$, matched sweeps record fewer violations in five of six
settings and lower mean cost in four than at zero weight. Success is
unchanged on both PickUp checkpoints and Fetch-safe. ObjectNav-base is
the adverse case: cost and violations rise, and success falls by $3.00$
points. This ablation isolates the added $z_G$ term; history-based recovery
remains active at zero weight. Fetch sweeps use all $20$ actions, so they
do not decompose the main top-$12$ result. Paired intervals and full
comparisons appear in Appendix~\ref{app:eta-sensitivity}.

Overall, our \vicsg{} decoder consistently lowers observed safety cost across
all six settings while keeping the policy's ability to finish largely intact.

\subsection{Execution feedback under imperfect execution}
\label{sec:feedback-ablation}

On safety-aligned Fetch, we test whether feedback from the preceding action
helps \vicsg{} adapt to imperfect execution. Using the full evaluation set,
we compare policy sampling and \vicsg{} with and without this feedback.
The feedback-aware decoder receives the preceding
commanded-versus-realized transition record; masking removes this record
while preserving the policy observation interface. We perturb action
magnitudes either independently at each step
(IID) or by a fixed factor per motion channel throughout the episode
(episode-persistent), with $\rho=0.1$. Each task is run twice with different perturbations, matched
across methods, to reduce dependence on a single noise draw.

Unperturbed runs and the supplementary \rcd{} and
\emph{Switch after failure} baselines were added after the initial results.
Their definitions and analyses remain separate from the prespecified tests
(Appendix~\ref{app:feedback-ablation-contract}).

\begin{table}[t]
\centering
\small
\setlength{\tabcolsep}{4pt}
\renewcommand{\arraystretch}{1.08}
\arrayrulecolor{TableRule}
\caption{\textbf{Execution feedback preserves near-policy task success while
reducing observed safety cost and violations under perturbed execution.}
Safety-aligned Fetch, $\rho=0.1$;
Reranking uses $K=12$. IID and episode-persistent perturbations have equal weight ($688$ executions per method,
$172$ paired tasks). Means are descriptive; safe success requires zero
violations. $\Delta$S (pp) and cost reduction are relative to policy sampling
and use unrounded values. Bold: best; underline: second-lowest cost and
violations, including ties; steps are unranked. $\dagger$ marks post-outcome
supplementary baselines. Full conditions and paired inference appear in
Appendix~\ref{app:feedback-ablation-contract}.}
\label{tab:feedback-ablation}

\begin{tabular}{@{}lrrrrrrr@{}}
\toprule
\rowcolor{TableHead}
Decoder & \shortstack{Success\\(\%) $\uparrow$}
& \shortstack{$\Delta$S\\(pp)}
& \shortstack{Safe success\\(\%) $\uparrow$}
& Cost $\downarrow$ & Cost red. $\uparrow$
& Viol. $\downarrow$ & Steps \\
\midrule
Policy sample
& 55.09 & -- & 28.63 & 4.33 & -- & 3.42 & 116.9 \\
\rcd{}$^{\dagger}$
& 37.79 & \textcolor{modernred}{$-17.30$} & 23.11 & \textbf{1.98} & \textbf{\textcolor{moderngreen}{54.3\%}} & \textbf{1.53} & 104.7 \\
Switch after failure$^{\dagger}$
& 53.92 & \textcolor{modernred}{$-1.16$} & 27.76 & 4.27 & \textcolor{moderngreen}{1.3\%} & 3.39 & 118.5 \\
\textcolor{ModernPurpleDark}{\vicsg{} (feedback masked)}
& 55.09 & $0.00$ & 28.49 & 4.03 & \textcolor{moderngreen}{6.8\%} & 3.18 & 117.0 \\
\rowcolor{oursbg}
\textbf{\textcolor{ModernPurpleDark}{\vicsg{} + execution feedback}}
& \textbf{55.67} & \textbf{\textcolor{moderngreen}{$+0.58$}} & \textbf{28.92} & \underline{3.64} & \textcolor{moderngreen}{15.9\%} & \underline{2.85} & 116.9 \\
\bottomrule
\end{tabular}

\arrayrulecolor{black}
\end{table}

\paragraph{Safety gains extend to imperfect execution.}
The feedback-aware decoder reaches $55.67\%$ success with $15.9\%$ lower
observed cost and $16.6\%$ fewer violations than policy sampling at virtually
identical mean episode length (\Cref{tab:feedback-ablation}). It leads the
five methods in pooled success and safe success, with lower cost than
policy sampling, \emph{Switch after failure}, and masked feedback.
\rcd{} reaches the lowest cost with $37.79\%$ success and shorter episodes.

\paragraph{Feedback meets the success criterion with favorable safety estimates.}
Against the same decoder with feedback masked, feedback yields $9.7\%$
lower observed cost and a $+0.58$-point success difference ($95\%$ interval
$[-1.16,2.33]$), meeting the preset $5$-point tolerance. The cost interval
spans zero, so confirmatory testing stops before the policy comparisons
(\Cref{tab:feedback-hierarchy}). Clean-condition cost estimates favor
masking (\Cref{tab:feedback-paired}). Under the tested perturbations, feedback
meets the success tolerance while letting the next decision respond to what
the robot actually did; its incremental safety benefit remains uncertain.

\subsection{Looking ahead to preserve safe completion}
\label{sec:rollout-study}

The theory motivates a direct test of rollouts: at $\beta=\lambda=0$,
feasible-future mass is the probability of safe completion under the frozen
policy (\Cref{eq:suffix-mass}). Scaling these estimates to many actions and
decision points is costly even in simulation, and their reliability remains
tied to the fidelity of the dynamics and contact
models~\citep{pinneri2021sample,lelidec2024contact}. We therefore study a
modest rollout augmentation of \vicsg{} that tests this criterion under a
limited simulation budget.

\vicsr{} uses at most one early lookahead decision. At an
eligible step, it compares the current action with up to two admissible
alternatives, samples four continuations per candidate, and executes the
candidate with the highest estimated probability of zero-cost completion.
It then returns control to \vicsg{}. These rollouts follow the deployed
decoder from a reconstructed simulator state; the ideal-ranking guarantee in
\Cref{cor:rollout-achievability} requires the history-conditioned
frozen-policy law and its associated scoring assumptions. All methods are
rerun in the modified, branchable simulator, separately from
\Cref{tab:main-results}.

On safety-aligned Fetch, \vicsr{} improves observed safe success over
\vicsg{} from $27.91\%$ to $30.23\%$, with task success unchanged at $58.72\%$.
Mean cost falls from $3.233$ to $3.116$ ($3.6\%$).
Full results, branching checks, and the selection
protocol appear in Appendix~\ref{app:rollout-study}.

In this exploratory comparison, the gain lies in how the robot finishes:
task success holds steady, while more completions incur no safety cost.

\subsection{Limitations}
\label{sec:limitations}

Our evaluation covers three simulated task families, with execution feedback
tested under two execution perturbation models at one severity. Broader
disturbances and deployment on physical robots remain future work. The
decoder relies on heuristic continuation scores, partial symbolic rules,
and a selective intervention gate, so its gains may vary across tasks and
checkpoints. The theoretical bounds are conditional on score accuracy and
candidate coverage. The rollout extension changes the continuation law and
adds an intervention opportunity; it neither isolates the additive
continuation score nor certifies exact simulator-state restoration.

\section{Conclusion}

Feasible-future decoding connects safe action selection to the task-completing futures each action leaves open. Its exact target gives this principle a mathematical foundation, and \vicsg{} turns it into a practical, training-free decoder for frozen VLA policies. Across the evaluated tasks and checkpoints, selective intervention reduces observed safety cost while retaining near-policy completion and sustained task progress. Execution feedback and simulator lookahead extend the approach to richer deployment settings. The broader aim is to help a capable policy carry its task safely through to the end, judging each move by the safe futures it may still preserve.

\clearpage
\subsection*{AI use statement}
\label{app:llm-assistance}
We used LLMs for writing and editing assistance, coding assistance, checking the consistency of mathematical derivations, and assisting with the interpretation of experimental results. The authors reviewed all AI-assisted work and take responsibility for the final content.

\bibliography{main}
\bibliographystyle{iclr2027_conference}

\appendix

\clearpage
\section{Qualitative Episode Comparisons}
\label{app:qualitative-episodes}

These episodes illustrate the intended role of \vicsg{}: local action
corrections can preserve progress toward task completion. The navigation
case shows recovery from repeated failed moves; the Fetch case shows
continued interaction leading to the intended grasp. In both comparisons,
\vicsg{} completes the task at lower cost than policy sampling, while
\rcd{} incurs less cost but terminates unsuccessfully. These selected
cases complement the aggregate results in \Cref{sec:experiments}.

\subsection{Navigation: recovering from failed retries}
\label{app:navigation-recovery}

This is the main-evaluation episode previewed in
\Cref{fig:frontpage-recovery}; the full view below adds the cost trace and
the timing of action replacements. It illustrates recovery and task
completion, rather than certifying the latent feasible-future mass.

Policy sampling and \vicsg{} share their first $68$ executed actions before
diverging near the region highlighted in
\Cref{fig:qualitative-objectnav-recovery}. Policy sampling
keeps retrying failed forward moves, accumulates cost, and eventually
terminates without success. \vicsg{} replaces some of these retries with
turns and backward moves, resumes progress, and completes the navigation
task. \rcd{} instead ends near the difficult region. The cost trace makes
the distinction clear: \vicsg{} continues after its cost levels off,
whereas the low-cost \rcd{} trajectory has already terminated.

\begin{figure}[H]
\centering
\includegraphics[width=\linewidth]{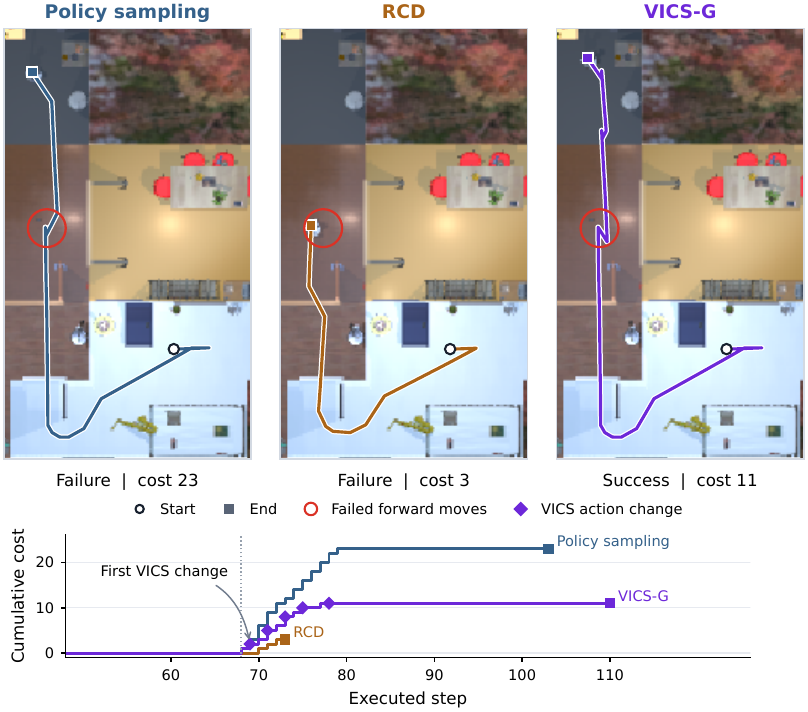}
\caption{\textbf{\vicsg{} escapes the failed-action loop and completes the task.}
In ObjectNav (``navigate to an apple''), policy sampling accumulates cost
through failed retries and ends unsuccessfully at cost $23$, while \rcd{} stops
early at cost $3$. \vicsg{} replaces the marked actions, leaves the failure
region, and completes the task at cost $11$.}
\label{fig:qualitative-objectnav-recovery}
\end{figure}

\clearpage
\subsection{Fetch: completing the intended grasp}

This supplementary case uses the modified simulator with a $600$-action
budget, separate from the main evaluation and the $200$-action rollout study.

All three methods first reach the same view of a houseplant beside a book
(\Cref{fig:qualitative-fetch-houseplant}). Their subsequent interaction
leads to different outcomes. Policy sampling eventually grasps the book
and terminates with the wrong object. \rcd{} ends shortly after the shared
view, without acquiring an object. \vicsg{} continues adjusting its
approach and arm position, then grasps the requested houseplant. Its safety
cost is lower than policy sampling's and higher than \rcd{}'s. Reading cost
alongside the frames explains the trade-off: \vicsg{} completes the grasp,
while \rcd{}'s lower cost accompanies an unfinished task.

\begin{figure}[H]
\centering
\includegraphics[width=\linewidth]{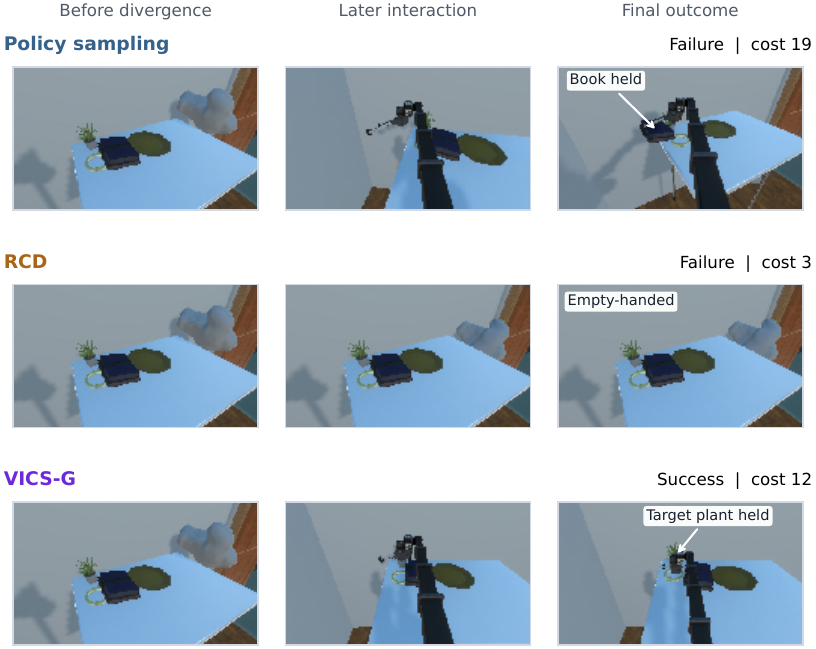}
\caption{\textbf{\vicsg{} preserves task completion while the alternatives fail differently.}
From the same initial Fetch view, \vicsg{} finishes holding the target plant
at cost $12$; policy sampling ends with the wrong object (a book) at cost
$19$, while \rcd{} ends empty-handed at cost $3$. Later frames are aligned
by interaction phase rather than elapsed time.}
\label{fig:qualitative-fetch-houseplant}
\end{figure}

\section{History-Conditioned Robot Decoding}
\label{app:formal-robot-decoding}

Let $x$ be a task instruction and $\mathcal H_t=(x,o_{\le t},a_{<t})$
the observable history. A frozen vision-language-action policy $\policy$,
the environment dynamics, and the observation process jointly induce the
future-trajectory law $\priorlaw(\traj\mid\mathcal H_t)$. Let $\safeset$
contain trajectories that complete $x$ within the designated hard safety
budget, and let $\reward_\instr(\traj)$ and $\cost(\traj)$ denote total
reward and safety cost. We first derive the distribution supported on safe
task completion, then examine its finite-candidate approximations.

\subsection{Reference trajectory target}
\label{app:reference-target}
Among trajectory distributions that assign probability one to safe task
completion, consider
\begin{equation}
\label{eq:app-kl-projection}
q^\star
=
\argmin_{q:\,q(\safeset)=1}
\left\{
\KL\!\left(q(\traj)\,\|\,\priorlaw(\traj\mid\mathcal H_t)\right)
-
\beta\E_q[\reward_\instr(\traj)]
+
\lambda\E_q[\cost(\traj)]
\right\}.
\end{equation}
\begin{proposition}[Prior-preserving feasible target]
Assume bounded total reward and cost over the finite horizon and positive
prior probability of safe completion. For $\lambda,\beta\ge0$, the unique
solution of \Cref{eq:app-kl-projection}, up to $\priorlaw$-null trajectories,
is
\begin{equation}
\label{eq:app-reference-density}
q^\star(\traj\mid\mathcal H_t)
=\frac{\priorlaw(\traj\mid\mathcal H_t)
\mathbf1\{\traj\in\safeset\}
e^{\beta\reward_\instr(\traj)-\lambda\cost(\traj)}}{Z(\mathcal H_t)},
\end{equation}
where $Z(\mathcal H_t)$ is the finite, positive normalizing constant.
\end{proposition}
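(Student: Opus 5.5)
The plan is the standard Gibbs-variational argument (Donsker--Varadhan), adapted to the constraint $q(\safeset)=1$. Write $w(\traj)=\mathbf 1\{\traj\in\safeset\}e^{\beta\reward_\instr(\traj)-\lambda\cost(\traj)}$ and $Z(\mathcal H_t)=\E_{\priorlaw}[w]$.

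\textbf{Step 1: $Z$ is finite and positive.} Boundedness of reward and cost gives $0<e^{-\beta\|\reward_\instr\|_\infty-\lambda\|\cost\|_\infty}\le e^{\beta\reward_\instr-\lambda\cost}\le e^{\beta\|\reward_\instr\|_\infty+\lambda\|\cost\|_\infty}<\infty$. Hence $Z\ge e^{-\beta\|\reward_\instr\|_\infty-\lambda\|\cost\|_\infty}\priorlaw(\safeset\mid\mathcal H_t)>0$ and $Z<\infty$. So $q^\star$ in \Cref{eq:app-reference-density} is a well-defined probability measure, and it satisfies $q^\star(\safeset)=1$.

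\textbf{Step 2: rewrite the objective on the feasible set.} Any feasible $q$ with finite objective has $\KL(q\|\priorlaw)<\infty$, so $q\ll\priorlaw$. Since $q(\safeset)=1$, $q$ is also absolutely continuous with respect to $q^\star$, because $q^\star$ has a strictly positive density with respect to $\priorlaw$ on $\safeset$. Also $\E_q[\reward_\instr]$ and $\E_q[\cost]$ are finite by boundedness.

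On $\safeset$ we have $\log\frac{dq^\star}{d\priorlaw}=\beta\reward_\instr-\lambda\cost-\log Z$. Using $\log\frac{dq}{d\priorlaw}=\log\frac{dq}{dq^\star}+\log\frac{dq^\star}{d\priorlaw}$ $q$-a.s., we obtain
\[
\KL(q\|\priorlaw)-\beta\E_q[\reward_\instr]+\lambda\E_q[\cost]=\KL(q\|q^\star)-\log Z .
\]
The only care needed here is integrability when splitting the expectation. Both $\KL(q\|\priorlaw)$ and the bounded term are finite, so $\KL(q\|q^\star)$ is finite as well and the split is legitimate.

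Feasible $q$ that are not absolutely continuous with respect to $\priorlaw$ have objective $+\infty$. Since $q^\star$ attains the finite value $-\log Z$, such $q$ cannot be minimizers.

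\textbf{Step 3: minimization and uniqueness.} By Gibbs' inequality, $\KL(q\|q^\star)\ge0$, with equality iff $q=q^\star$ as measures. So the minimum value is $-\log Z$ and it is attained exactly at $q^\star$. Densities with respect to $\priorlaw$ are determined only up to $\priorlaw$-null sets, which is precisely the stated uniqueness.

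The main obstacle is the measure-theoretic bookkeeping in Step 2: justifying $q\ll q^\star$ from the constraint, and ensuring the decomposition involves no $\infty-\infty$. Boundedness of $\reward_\instr$ and $\cost$ together with $\priorlaw(\safeset)>0$ handles both. The sign conditions $\beta,\lambda\ge0$ are not actually needed beyond what boundedness already gives.
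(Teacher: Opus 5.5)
Your proof is correct, but it takes a different route from the paper. The paper forms a Lagrangian over densities on $\safeset$ with a normalization multiplier $\alpha$. It sets the functional derivative $\log q-\log\priorlaw+1-\beta\reward_\instr+\lambda\cost+\alpha$ to zero to read off the exponential tilt, then cites strict convexity of the KL term on the convex feasible set for uniqueness. You instead guess $q^\star$ and verify it through the identity $\mathcal L(q)=\KL(q\,\|\,q^\star)-\log Z$ for every feasible $q$ with finite objective. Gibbs' inequality then gives, in one step, existence, global optimality, the optimal value $-\log Z$, and uniqueness from the equality case.

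The paper's calculation shows where the tilt comes from, and it would extend mechanically to further moment constraints by adding multipliers. As written, though, it is formal: it differentiates a density against an unspecified $d\traj$, leaves the nonnegativity of $q$ implicit, and certifies only a stationary point, leaning on convexity to upgrade it. Your verification argument needs no differentiation and works on an arbitrary trajectory space through Radon--Nikodym derivatives. It also shows exactly where each hypothesis enters:
\begin{itemize}
\item $\priorlaw(\safeset)>0$ together with boundedness gives $0<Z<\infty$;
\item boundedness rules out $\infty-\infty$ when you split the log-density;
\item positivity of $dq^\star/d\priorlaw$ on $\safeset$ gives $q\ll q^\star$.
\end{itemize}

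One small point: your Step~1 bound $e^{-\beta\|\reward_\instr\|_\infty-\lambda\|\cost\|_\infty}\le e^{\beta\reward_\instr-\lambda\cost}$ does use $\beta,\lambda\ge0$ as written. Write $|\beta|,|\lambda|$ there if you want your closing remark about the sign conditions to hold.
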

This is the standard exponential-tilt construction from KL-regularized
control and control as inference
\citep{todorov2006linearly,toussaint2009robot,levine2018reinforcement}; it is
stated here to make the reference measure and regularity conditions explicit.

\paragraph{Robot model and trajectory law.}
The model separates the robot's latent state from the history available to
the policy. We use the partially observable decision process
\[
\mathcal M_x=(\mathcal S,\mathcal O,\mathcal A,P,\Omega,R_x,C,\gamma),
\]
where $s_t\in\mathcal S$ is latent physical state,
$o_t\sim\Omega(\cdot\mid s_t)$ is the observation, and
$a_t\in\mathcal A$ is an action or action token; $\gamma$ is the discount
factor. The policy is queried in blocks,
so $a\equiv a_{t:t+H-1}$ induces
\begin{equation}
\label{eq:app-robot-block-transition}
s_{t+H}\sim P_H(\cdot\mid s_t,a).
\end{equation}
Writing the observable history in expanded form,
\begin{equation}
\label{eq:app-history}
\mathcal H_t=(x,o_{\le t},a_{<t}),
\end{equation}
the frozen policy $\policy(\cdot\mid\mathcal H_t)$, $P$, and $\Omega$
induce the trajectory law $\priorlaw(\traj\mid\mathcal H_t)$.
With
\[
b_t(ds)=\Pr(s_t\in ds\mid\mathcal H_t),
\]
the candidate-conditioned suffix law is
\begin{equation}
\label{eq:belief-integrated-suffix-law}
\priorlaw(\xi\mid\mathcal H_t,a)
=
\int \priorlaw(\xi\mid s,\mathcal H_t,a)\,b_t(ds).
\end{equation}
Here $\xi$ includes the candidate block's random execution and its
subsequent continuation. The integral specifies how latent-state uncertainty
enters the reference law; it does not require the deployed decoder to
reconstruct a belief state.

\paragraph{Remaining-budget state.}
For a cumulative safety budget $d$ and observed accumulated cost $C_{<t}$,
let $d_t=d-C_{<t}$ denote the budget remaining at history $\mathcal H_t$.
The same blockwise construction applies
to the augmented history $\widetilde{\mathcal H}_t=(\mathcal H_t,d_t)$, and a
candidate-conditioned realization is feasible only if
\[
C_{\mathrm{suf}}(\xi)
\le
d_t-C_{\mathrm{loc}}(\widetilde{\mathcal H}_t,a).
\]
Here $C_{\mathrm{loc}}$ is fixed given the augmented history and candidate,
while $C_{\mathrm{suf}}$ includes random costs incurred during the block.
The realized post-block budget therefore remains inside the continuation
expectation. This additive decomposition requires the remaining budget to
be part of the decision state.

A test-time decoder receives $\mathcal H_t$ and a finite set
$\cand\subseteq\operatorname{supp}\policy(\cdot\mid\mathcal H_t)$,
and returns $D(\mathcal H_t,\cand)\in\cand$.

\paragraph{Unweighted and weighted continuation mass.}
Write a future realization as $(a,\xi)$, with reward and cost decomposed
as $R_x=R_{\mathrm{loc}}+R_{\mathrm{suf}}$ and
$C=C_{\mathrm{loc}}+C_{\mathrm{suf}}$. Only terms determined by
$(\mathcal H_t,a)$ enter the local contributions; stochastic execution
costs remain in the suffix. The weighted continuation mass is
\begin{equation}
\label{eq:app-weighted-continuation}
\zfeas(\mathcal H_t,a)
=\E_{\xi\sim\priorlaw(\cdot\mid\mathcal H_t,a)}
\!\left[\mathbf1\{(a,\xi)\in\safeset\}
e^{\beta R_{\mathrm{suf}}(\xi,x)-\lambda C_{\mathrm{suf}}(\xi)}\right].
\end{equation}
The corresponding probability of safe completion is
\begin{equation}
\label{eq:app-feasible-continuation-mass}
\Phi(\mathcal H_t,a)
=
\Prob_{\priorlaw}[\traj\in\safeset\mid\mathcal H_t,a].
\end{equation}
When $\beta=\lambda=0$, $\zfeas(\mathcal H_t,a)=\Phi(\mathcal H_t,a)$.
A finite, strictly positive exponential weight preserves support:
$\zfeas(\mathcal H_t,a)>0$ exactly when $\Phi(\mathcal H_t,a)>0$.
Reward--cost weighting changes the mass of feasible continuations while
preserving whether that mass is positive.

\paragraph{Execution feedback.}
The continuation mass depends on the transition law, so uncertain execution
changes the futures associated with a candidate. The experiment in
Appendix~\ref{app:feedback-ablation-contract} tests whether access to the preceding
commanded-versus-realized transition helps the decoder use this information
at the next decision.

\paragraph{Scope of the per-decision guarantee.}
The finite-candidate bound in Appendix~\ref{app:proof-ranking-consistency}
applies at a fixed history. Extending it to a full trajectory would require
control over how approximation errors accumulate: each selected action
changes the next history, candidate set, and feasible continuations. Such a
result would need uniform error bounds over reachable histories and an
argument linking the per-decision bounds across time. We do not assume
these stronger conditions.

\section{Full Proofs}
\label{app:full-proofs}

\subsection{Proof of the Prior-Preserving Feasible Target}
\label{app:proof-exp-tilt}

\begin{proof}
Any $q$ with finite objective is absolutely continuous with respect to
$\priorlaw(\cdot\mid\mathcal H_t)$. Because feasible distributions satisfy
$q(\safeset)=1$, their density vanishes outside $\safeset$ and the objective
in \Cref{eq:app-kl-projection} can be written over $\safeset$ as
\begin{equation}
\begin{aligned}
\mathcal L(q)
={}&
\int_{\safeset}q(\traj)
\log\frac{q(\traj)}{\priorlaw(\traj\mid\mathcal H_t)}\,d\traj
-
\beta\int_{\safeset}q(\traj)\reward_\instr(\traj)\,d\traj\\
&+
\lambda\int_{\safeset}q(\traj)\cost(\traj)\,d\traj
+
\alpha\left(\int_{\safeset}q(\traj)\,d\traj-1\right).
\end{aligned}
\end{equation}
The functional derivative is
\begin{equation}
\frac{\delta\mathcal L}{\delta q(\traj)}
=
\log q(\traj)-\log\priorlaw(\traj\mid\mathcal H_t)
+1-\beta\reward_\instr(\traj)+\lambda\cost(\traj)+\alpha.
\end{equation}
Setting it to zero gives the exponential tilt of $\priorlaw$ on $\safeset$;
the feasibility constraint gives zero mass outside $\safeset$, and
normalization yields \Cref{eq:app-reference-density}. Strict convexity of the KL term
on the convex feasible set gives uniqueness up to $\priorlaw$-null
trajectories.
\end{proof}

\subsection{Proof of the Feasible-Future Identity}
\label{app:proof-blockwise}

\begin{proof}
Write a trajectory as $(a,\xi)$. From \Cref{eq:app-reference-density},
\begin{equation}
q^\star(a,\xi\mid\mathcal H_t)
\propto
\priorlaw(a,\xi\mid\mathcal H_t)
\mathbf 1\{(a,\xi)\in\safeset\}
\exp\!\left(\beta\reward_\instr(a,\xi)-\lambda\cost(a,\xi)\right).
\end{equation}
The factorization
$\priorlaw(a,\xi\mid\mathcal H_t)=\policy(a\mid\mathcal H_t)
\priorlaw(\xi\mid\mathcal H_t,a)$ and the additive decompositions give
\begin{equation}
\begin{aligned}
q^\star(a,\xi\mid\mathcal H_t)
\propto{}&
\policy(a\mid\mathcal H_t)
\exp\!\left(
\beta\reward_{\mathrm{loc}}(\mathcal H_t,a)
-\lambda\cost_{\mathrm{loc}}(\mathcal H_t,a)
\right)\\
&\cdot
\priorlaw(\xi\mid\mathcal H_t,a)
\mathbf 1\{(a,\xi)\in\safeset\}
\exp\!\left(
\beta\reward_{\mathrm{suf}}(\xi,\instr)
-\lambda\cost_{\mathrm{suf}}(\xi)
\right).
\end{aligned}
\end{equation}
Marginalizing over $\xi$ gives
\begin{equation}
\label{eq:app-blockwise-target}
q^\star(a\mid\mathcal H_t)\propto
\policy(a\mid\mathcal H_t)e^{\beta R_{\mathrm{loc}}-\lambda C_{\mathrm{loc}}}
\zfeas(\mathcal H_t,a).
\end{equation}
Its unnormalized log mass is the ideal score
\begin{equation}
\label{eq:app-ideal-score}
\trueS(a)=\log\policy(a\mid\mathcal H_t)
+\beta R_{\mathrm{loc}}(\mathcal H_t,a)
-\lambda C_{\mathrm{loc}}(\mathcal H_t,a)
+\log\zfeas(\mathcal H_t,a),
\end{equation}
with $\log0=-\infty$. Equation~\eqref{eq:belief-integrated-suffix-law}
makes explicit that the conditional suffix law already integrates latent-state
and transition uncertainty.
\end{proof}

\subsection{Proof of the Partial-Support Finite-Candidate Theorem}
\label{app:proof-ranking-consistency}

Fix a history and a finite admitted candidate set $\bcand$. Define its
viable subset $\truesupp=\{a\in\bcand:\zfeas(\mathcal H_t,a)>0\}$.
If a support rule rejects $\mathcal D_t\subseteq\bcand$, the survivors and
their viable and nonviable subsets are
\[
\widehat{\mathcal F}_t=\bcand\setminus\mathcal D_t,\qquad
\retained=\truesupp\cap\widehat{\mathcal F}_t,\qquad
\unresolved=\widehat{\mathcal F}_t\setminus\truesupp.
\]
The proof needs two conditions: a viable survivor must outrank every
remaining dead end, and score differences among viable survivors must be
accurate. Formally, let $\hatS$ be a finite practical score and assume $\retained\ne\emptyset$,
$\Delta_t^{\mathrm{sup}}:=\max_{g\in\retained}\hatS(g)
-\max_{r\in\unresolved}\hatS(r)>0$, with $\max\emptyset=-\infty$, and
\begin{equation}
\label{eq:app-pairwise-error}
\left|[\hatS(a)-\hatS(b)]-[\trueS(a)-\trueS(b)]\right|
\le\varepsilon\qquad(a,b\in\retained).
\end{equation}
Let $\viableuniv\supseteq\truesupp$ be a finite viable comparison set.
For $\mathcal X\in\{\mathcal G,\mathcal F,\mathcal V\}$, let
$a_{\mathcal X}^\star$ maximize $\trueS$ over $\mathcal X_t$, and let
$\widehat a$ maximize $\hatS$ over $\widehat{\mathcal F}_t$, using a common
deterministic tie rule. Define
$\rankloss=\trueS(a_{\mathcal V}^\star)-\trueS(a_{\mathcal F}^\star)$ and
$\rejectloss=\trueS(a_{\mathcal F}^\star)-\trueS(a_{\mathcal G}^\star)$.
We show that $\widehat a\in\retained$, its regret within $\retained$ is at
most $\varepsilon$, and its regret relative to $\viableuniv$ is at most
$\rankloss+\rejectloss+\varepsilon$.

\begin{proof}
By definition, the surviving set is the disjoint union of viable and
nonviable candidates:
\[
\widehat{\mathcal F}_t
=
\retained\mathbin{\dot\cup}\unresolved.
\]
Moreover, $\Delta_t^{\mathrm{sup}}>0$ means that
\[
\max_{g\in\retained}\hatS(g)
>
\max_{r\in\unresolved}\hatS(r).
\]
Hence no candidate in $\unresolved$ can maximize $\hatS$ over
$\widehat{\mathcal F}_t$, so $\widehat a\in\retained$. Since $\widehat a$
maximizes $\hatS$ over the full surviving set, it also maximizes $\hatS$ over
$\retained$.

Let $a_{\mathcal G}^\star$ maximize $\trueS$ over $\retained$. Then
\[
\hatS(a_{\mathcal G}^\star)-\hatS(\widehat a)\le0.
\]
Therefore
\begin{equation}
\begin{aligned}
\trueS(a_{\mathcal G}^\star)-\trueS(\widehat a)
={}&
\Bigl(
[\trueS(a_{\mathcal G}^\star)-\trueS(\widehat a)]
-[\hatS(a_{\mathcal G}^\star)-\hatS(\widehat a)]
\Bigr)\\
&+
[\hatS(a_{\mathcal G}^\star)-\hatS(\widehat a)]
\le\varepsilon,
\end{aligned}
\end{equation}
where the final inequality applies \Cref{eq:app-pairwise-error}. If
the ideal margin of $a_{\mathcal G}^\star$ over the other retained viable
actions exceeds $\varepsilon$ (with infinite margin for a singleton), selecting
any other action would contradict this regret bound, proving top-one recovery.

Finally, because $\retained\subseteq\truesupp\subseteq\viableuniv$, add and
subtract the best ideal scores over $\truesupp$ and $\retained$:
\begin{equation}
\begin{aligned}
\trueS(a_{\mathcal V}^\star)-\trueS(\widehat a)
={}&
[\trueS(a_{\mathcal V}^\star)-\trueS(a_{\mathcal F}^\star)]\\
&+
[\trueS(a_{\mathcal F}^\star)-\trueS(a_{\mathcal G}^\star)]\\
&+
[\trueS(a_{\mathcal G}^\star)-\trueS(\widehat a)]\\
\le{}&
\rankloss+\rejectloss+\varepsilon.
\end{aligned}
\end{equation}
The first two differences are precisely the proposal/admission loss
$\rankloss$ and false-rejection loss $\rejectloss$, respectively.
\end{proof}

\paragraph{Special cases.}
Sound rejection gives $\retained=\truesupp$ and hence $\rejectloss=0$.
Complete rejection also gives $\unresolved=\emptyset$, so support separation
is automatic. With no hard rejection, all viable candidates remain, but the
practical score must separate them from the surviving dead ends.

\paragraph{Selective execution.}
Let $a_{\mathrm{pol}}$ be the policy sample and $j_t=1$ indicate that
reranking is invoked, its surviving set is nonempty, and replacement is
authorized. The executed action is $\widehat a$ when $j_t=1$ and
$a_{\mathrm{pol}}$ otherwise. Include $a_{\mathrm{pol}}$ in
$\viableuniv$ whenever it is viable. The bound applies on the $j_t=1$
branch under the stated assumptions. When $j_t=0$, the loss is
$\trueS(a_{\mathcal V}^\star)-\trueS(a_{\mathrm{pol}})$, which can be
infinite if the retained sample has zero feasible-future mass.

\subsection{Safe-Dead-End Separation}
\label{app:proof-safe-dead-end}

\begin{proposition}[Safe-dead-end separation]
\label{prop:safe-dead-end}
There exist two policy-supported candidates with equal likelihood and equal
local reward--cost terms such that one has positive feasible-future mass and
the other has zero feasible-future mass. Every decoder based only on the
shared local quantities is indifferent between them, whereas the ideal
blockwise rule excludes the dead end.
\end{proposition}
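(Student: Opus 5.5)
The plan is a proof by explicit construction: I will build a minimal finite model and then read both claims off \Cref{thm:blockwise-suffix-mass}. Fix a history $\mathcal H_t$ and take a deterministic latent state, so the belief $b_t$ is a point mass. Use horizon $H=1$ and two candidate blocks $a,b$ with $\policy(a\mid\mathcal H_t)=\policy(b\mid\mathcal H_t)=1/2$. Set $\reward_{\mathrm{loc}}(\mathcal H_t,a)=\reward_{\mathrm{loc}}(\mathcal H_t,b)$ and $\cost_{\mathrm{loc}}(\mathcal H_t,a)=\cost_{\mathrm{loc}}(\mathcal H_t,b)=0$, so both blocks pass any local admissibility check with the same margin. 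Let $a$ lead deterministically to a state $s_a$ and $b$ to a state $s_b$. These two states are indistinguishable to every local quantity, including the predicted local trace, so they share $\rho_C$ and any local monitor.

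The continuation is where the candidates differ. From $s_a$, the frozen policy takes one further action that completes the instruction at zero cost, so the continuation lies in $\safeset$. From $s_b$, every policy-supported action either fails to complete the task or exceeds the hard budget. Any continuation that completes the task from $s_b$ must therefore use an action outside $\operatorname{supp}\policy$. This realizes the safe dead end of \Cref{sec:preliminaries}.

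The evaluation then goes through \Cref{eq:suffix-mass}. Since the exponential weight is finite and strictly positive, it preserves support. Hence $\zfeas(\mathcal H_t,a)=e^{\beta\reward_{\mathrm{suf}}-\lambda\cost_{\mathrm{suf}}}>0$, while $\zfeas(\mathcal H_t,b)=0$ because the indicator vanishes $\priorlaw(\cdot\mid\mathcal H_t,b)$-almost surely. Substituting into \Cref{eq:blockwise-target-main} gives $q^\star(b\mid\mathcal H_t)=0<q^\star(a\mid\mathcal H_t)$, so the ideal rule excludes $b$. Equivalently, $\trueS(b)=-\infty$ in \Cref{eq:ideal-total-score}.

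For indifference, I formalize a local-only decoder as a map $D$ that depends on each candidate only through the tuple $\bigl(\log\policy,\reward_{\mathrm{loc}},\cost_{\mathrm{loc}},\rho_C\bigr)$. Since $a$ and $b$ have identical tuples, any score built from these quantities, such as $\score_{\mathrm{loc}}$ in \Cref{eq:local-robustness-score}, assigns them equal values. Any tie rule that does not look beyond these features therefore cannot prefer $a$ systematically. To make this rigorous, I apply the swapped model with the two continuations exchanged: there $b$ is viable and $a$ is the dead end. The local data are identical in both models, so a local decoder picks the dead end in at least one of them.

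The main obstacle is making "indifferent" precise without the claim becoming vacuous through deterministic tie-breaking. The symmetrization argument with the swapped model handles this, and I would state it explicitly. I would also check that the model satisfies the conditions of \Cref{thm:blockwise-suffix-mass}: the factorization \Cref{eq:prior-factorization} holds by construction, and $Z(\mathcal H_t)>0$ holds because $a$ carries positive safe mass.
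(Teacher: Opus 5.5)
Your proposal is correct and follows essentially the same route as the paper. Both construct two candidates with identical likelihood and local reward--cost data, one leading to a policy-supported safe suffix and the other to a safe dead end, and then read off $\zfeas>0$ versus $\zfeas=0$, so that $\trueS(b)=-\infty$. Your explicit finite instantiation and swapped-model argument (which makes indifference non-vacuous under deterministic tie-breaking) go slightly beyond the paper's proof, which simply asserts that every local score is equal on the pair; the swapped-model step mirrors the paper's separate two-world example in Appendix~\ref{app:controlled-diagnostics}.
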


\begin{proof}
Consider two candidates $a$ and $b$ with equal likelihood and equal local
reward and cost. Let $a$ reach a state from which at least one safe successful
suffix has positive probability under the frozen continuation policy, and let
$b$ reach a state from which every policy-supported suffix fails or violates
the hard budget. Then $\zfeas(\mathcal H_t,a)>0$ and
$\zfeas(\mathcal H_t,b)=0$. Every local-information score is equal on the
pair. By \Cref{eq:app-ideal-score}, candidate $b$ receives
$\log\zfeas(\mathcal H_t,b)=\log0=-\infty$, whereas $a$ has positive
feasible-future mass; hence the ideal rule selects $a$.
\end{proof}

\section{Score Error and Rollout Recovery}
\label{app:score-achievability}

The generic Viability-Corrected Sampling rule (\vicsg{}) uses pre-action
information $\mathcal I_t$, comprising the observable history and available
safety-monitor observations. Its score is
\begin{equation}
\label{eq:app-generic-score}
S_G(a)=\log\policy(a\mid\mathcal H_t)
+\lambda_c\widehat\rho_{\mathrm{loc}}(\mathcal I_t,a)
+\eta_v z_G(\mathcal I_t,a),
\end{equation}
where $\widehat\rho_{\mathrm{loc}}$ is local robustness and $z_G$ is a
continuation correction, with nonnegative weights $\lambda_c,\eta_v$.
Appendix~\ref{app:generic-specification} defines the evaluated scores.
Write $\zmag^\star=\zfeas$ on viable candidates and set
$\zmag^\star=1$ otherwise; this neutral convention separates support
from positive continuation magnitude.

\paragraph{Sources of score error.}
Ranking error comes from both the local robustness term and the continuation
correction. For retained viable candidates $a,b$, suppress common
conditioning and define
\begin{equation}
\begin{aligned}
e_{\mathrm{loc}}(a,b)
={}&
\lambda_c\!
\left[\widehat\rho_{\mathrm{loc}}(a)-\widehat\rho_{\mathrm{loc}}(b)\right]
-
\Bigl(
[\beta\reward_{\mathrm{loc}}(a)-\lambda\cost_{\mathrm{loc}}(a)]
-
[\beta\reward_{\mathrm{loc}}(b)-\lambda\cost_{\mathrm{loc}}(b)]
\Bigr),\\
e_{\mathrm{fut}}(a,b)
={}&
\eta_v[z_G(a)-z_G(b)]
-
[\log\zmag^\star(a)-\log\zmag^\star(b)].
\end{aligned}
\end{equation}
Because $\zfeas=\zmag^\star>0$ on $\retained$,
\begin{equation}
\label{eq:total-error-decomposition}
\left|
[S_G(a)-S_G(b)]-[\trueS(a)-\trueS(b)]
\right|
\le
|e_{\mathrm{loc}}(a,b)|+|e_{\mathrm{fut}}(a,b)|.
\end{equation}
Thus local-score mismatch and the scale of $\eta_v z_G$ enter the same
pairwise error $\varepsilon$ in \Cref{eq:app-pairwise-error}.
The experiments measure task outcomes and compare decoder variants; they
do not estimate these latent score errors.

\paragraph{Finite-sample recovery from continuation rollouts.}
Feasible-future mass is an expectation under the frozen continuation law, so a branchable model lets us estimate the ideal future term by sampling continuations. For decoding, the key question is whether these estimates preserve the candidate ordering. The next result gives a sufficient rollout budget in terms of the feasible-mass floor and the ideal score margin.

\begin{corollary}[Finite-sample rollout recovery]
\label{cor:rollout-achievability}
Let $\mathcal U_t$ contain $N$ policy-supported candidates and at
least two viable candidates. For each $a\in\mathcal U_t$, draw $n$ independent
suffixes $\xi_{a,i}\sim\priorlaw(\cdot\mid\mathcal H_t,a)$ and set
\[
W_{a,i}
=
\mathbf 1\{(a,\xi_{a,i})\in\safeset\}
\exp\!\left(
\beta\reward_{\mathrm{suf}}(\xi_{a,i},\instr)
-
\lambda\cost_{\mathrm{suf}}(\xi_{a,i})
\right),
\qquad
\widehat Z_n(a)=\frac1n\sum_{i=1}^n W_{a,i}.
\]
Assume $0\le W_{a,i}\le M$ almost surely and
$\zfeas(\mathcal H_t,a)\ge z_{\min}>0$ for every viable candidate. Define
\[
B(a)=
\log\policy(a\mid\mathcal H_t)
+
\beta\reward_{\mathrm{loc}}(\mathcal H_t,a)
-
\lambda\cost_{\mathrm{loc}}(\mathcal H_t,a),
\]
and let $a^\star$ be the unique maximizer of
$B(a)+\log\zfeas(\mathcal H_t,a)$ over viable actions, with margin
$\Delta>0$. For $\delta\in(0,1)$, let
$\rho_\Delta=1-e^{-\Delta/4}$. If
\begin{equation}
\label{eq:rollout-sample-complexity}
n
\ge
\frac{M^2}{2z_{\min}^2\rho_\Delta^2}
\log\frac{2N}{\delta},
\end{equation}
then, with probability at least $1-\delta$,
\[
\argmax_{a\in\mathcal U_t}
\{B(a)+\log\widehat Z_n(a)\}
=
a^\star,
\]
with $\log0=-\infty$. Every zero-support candidate satisfies
$\widehat Z_n(a)=0$ almost surely.
\end{corollary}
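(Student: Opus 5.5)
The plan is a Hoeffding bound plus a union bound, turned into a multiplicative control of $\widehat Z_n(a)/\zfeas(\mathcal H_t,a)$ on viable candidates; I then check that this control preserves the ideal winner.

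\textbf{Zero-support candidates.} First dispose of them. If $\zfeas(\mathcal H_t,a)=0$, then since each $W_{a,i}\ge0$ has zero mean, $W_{a,i}=0$ almost surely. Hence $\widehat Z_n(a)=0$ almost surely and its empirical score is $-\infty$. Because $\mathcal U_t$ contains at least two viable candidates, some candidate always has finite ideal score, so zero-support candidates can never be selected on the good event below. This also gives the final sentence of the corollary.

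\textbf{Concentration event.} For each viable $a$, the $W_{a,i}$ are i.i.d.\ in $[0,M]$ with mean $\zfeas(\mathcal H_t,a)$. Hoeffding gives
\[
\Prob\bigl(|\widehat Z_n(a)-\zfeas(\mathcal H_t,a)|\ge t\bigr)\le 2e^{-2nt^2/M^2}.
\]
I take $t=\rho_\Delta z_{\min}$ and union bound over at most $N$ candidates. The stated condition on $n$ then makes the total failure probability at most $\delta$. On the complementary event, every viable $a$ satisfies
\[
|\widehat Z_n(a)/\zfeas(\mathcal H_t,a)-1|\le \rho_\Delta z_{\min}/\zfeas(\mathcal H_t,a)\le\rho_\Delta,
\]
using $\zfeas\ge z_{\min}$. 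In particular $\widehat Z_n(a)>0$, since $\rho_\Delta<1$.

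\textbf{Ranking preservation.} I convert the multiplicative bound to a log bound. The upper side gives $\log(1+\rho_\Delta)\le\rho_\Delta\le\Delta/4$, using $1-e^{-x}\le x$. The lower side gives $\log(1-\rho_\Delta)=-\Delta/4$ exactly, which is why $\rho_\Delta$ is chosen as $1-e^{-\Delta/4}$. So $|\log\widehat Z_n(a)-\log\zfeas(\mathcal H_t,a)|\le\Delta/4$ for every viable $a$.

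Now compare $a^\star$ with any other viable $b$. The empirical score difference is at least $\Delta-2\cdot\Delta/4=\Delta/2>0$. Zero-support candidates score $-\infty$. Therefore the empirical argmax is $a^\star$, and it is unique, so tie rules are irrelevant.

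\textbf{Main obstacle.} The only delicate step is the lower logarithmic deviation, which blows up as the ratio approaches $0$. The choice $t=\rho_\Delta z_{\min}$ together with the floor $z_{\min}$ is exactly what controls it. I would verify that the constant works out as $n\ge M^2\log(2N/\delta)/(2t^2)$, matching the stated bound.
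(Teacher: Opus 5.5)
Your proposal is correct and follows the paper's proof essentially step for step. Both use Hoeffding plus a union bound at deviation $\rho_\Delta z_{\min}$, convert this to relative error at most $\rho_\Delta$ and hence log error at most $\Delta/4$ on viable candidates, so that every pairwise margin shifts by at most $\Delta/2$; zero-support candidates are disposed of by nonnegativity and zero mean. Your explicit justification of the upper side via $\log(1+\rho_\Delta)\le\rho_\Delta\le\Delta/4$ fills in a step the paper states without comment.
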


\begin{proof}
Hoeffding's inequality and a union bound give, with probability at least
$1-\delta$,
\[
|\widehat Z_n(a)-\zfeas(\mathcal H_t,a)|
\le
\rho_\Delta z_{\min}
\qquad\forall a\in\mathcal U_t.
\]
For every viable $a$, this is a relative error of at most $\rho_\Delta$, so
\[
e^{-\Delta/4}
\le
\frac{\widehat Z_n(a)}{\zfeas(\mathcal H_t,a)}
\le
e^{\Delta/4}.
\]
Each log mass is therefore accurate within $\Delta/4$, and every pairwise
margin changes by at most $\Delta/2$; the unique maximizer is preserved. If
$\zfeas(\mathcal H_t,a)=0$, nonnegativity and zero expectation imply
$W_{a,i}=0$ almost surely.
\end{proof}

For small $\Delta$, \Cref{eq:rollout-sample-complexity} scales as $O(M^2 z_{\min}^{-2}\Delta^{-2}\log(N/\delta))$: near ties and rare safe completions demand more rollouts, while a clear winner can be resolved with a modest budget. Here $N$ counts the candidates actually evaluated, including any retained policy sample, so retaining the sample beside the $K$ highest-likelihood actions gives $N\le K+1$. The result formalizes how the number of continuation samples controls whether the estimated ranking preserves the winner.

\section{Exact Finite-Model Examples}
\label{app:controlled-diagnostics}

Finite two-stage models make the roles of local safety, feasible support,
and continuation magnitude explicit. The specified probabilities and
weights are rational, so the examples can be evaluated exactly.
The weighted examples use $\beta=\lambda=\log 2$.

\paragraph{Local safety, support, and magnitude.}
Consider the four-action model illustrated in \Cref{fig:exact-finite-witness}:
\[
\mathcal A=\{a_U,a_D,a_L,a_H\},
\qquad
\pi(a_U,a_D,a_L,a_H)=\frac{1}{27}(16,8,2,1).
\]
Let $m=(0,1,1,1)$ be the local-admissibility indicator and
$\chi_F=(0,0,1,1)$ the true feasible support. Action $a_U$ violates the local
condition immediately; $a_D$ has a locally admissible prefix but no safe
successful suffix. For the supported actions, let
\[
\overline Z_F(a_L)=\frac14,
\qquad
\overline Z_F(a_H)=1.
\]
This corresponds to a unit-weight safe-success suffix occurring with
probability $1/4$ or $1$, respectively. Thus
\[
27\,\pi(a)\chi_F(a)\overline Z_F(a)
=
\left(0,0,\frac12,1\right),
\]
and normalization gives
\[
q_F^\star=(0,0,1/3,2/3).
\]
The policy MAP action is $a_U$; local filtering alone selects $a_D$; exact
support with flat magnitude selects $a_L$; and exact support with exact
magnitude selects $a_H$.
Local admissibility removes $16/27$ prior mass on $a_U$; the
dead end $a_D$ carries $8/11$ of the locally admitted mass. Magnitude reverses
the feasible odds $q(a_H)/q(a_L)$ from $1/2$ to $2$. In base-two log units, $a_H$ exceeds $a_L$ by
exactly one bit. The construction witnesses the distinct roles of immediate
safety, feasible support, and within-support magnitude under an ideal rule
that evaluates every factor. Under selective intervention, accepting $a_D$
at the initial gate bypasses the remaining factors. Omitting
$a_H$ is proposal/admission coverage error; rejecting a viable action is
false-rejection loss; and selecting $a_L$ while $a_H$ is retained is
total-score ranking error.

\paragraph{Why local information cannot solve both worlds.}
Consider two actions and two equally weighted worlds with identical policy
likelihoods and local reward--cost terms. In the first world, only the first
action admits safe completion, with probability one; in the second, only
the second action does. Any decoder restricted to the shared local quantities
uses the same action distribution in both worlds and has mean safe-completion
probability $1/2$. The exact feasible-future rule selects the supported action
in each world and succeeds in both. Continuation-dependent information is
therefore necessary for uniform success on this pair.

\paragraph{Magnitude when safe completion is certain.}
\Cref{tab:exact-support-magnitude} uses three actions
$(b_D,b_L,b_H)$ with equal priors and one deterministic suffix per action.
Their safe-success indicators are $(0,1,1)$, suffix rewards are $(0,0,2)$,
and suffix costs are zero. Thus both supported actions succeed surely,
unlike $a_L$ in the four-action witness. For every action we enumerate
\begin{equation}
\label{eq:finite-hard-feasible-mass}
Z_F(a)
=
\sum_{\xi}
 p(\xi\mid a)
 \mathbf 1\{\xi\text{ is safe and successful}\}
 2^{R(\xi)}2^{-C(\xi)},
\end{equation}
and verify exactly that
\[
Z_F(a)=\chi_F(a)\,\overline Z_F(a),
\]
using the neutral convention $\overline Z_F(a)=1$ when $\chi_F(a)=0$. We
then hold the local base-two scores fixed at $(3,1,0)$ and set $\eta_v=1$.
The hard masses are $(0,1,4)$ and the within-support log magnitudes are
$(0,0,2)$, including the neutral zero-support convention. The four
combinations of support rejection and magnitude scoring produce the table's
four decisions. Regret is the ideal score of the exact supported optimum
minus the selected action's ideal score; it is $+\infty$ for
$b_D$, whose ideal score is $-\infty$. Exact support is enough to avoid the dead
end; exact magnitude is additionally required to choose the higher-valued
feasible continuation.

\begin{table}[t]
\centering
\small
\setlength{\tabcolsep}{5pt}
\caption{Effect of support filtering and continuation weighting in the three-action example. Both viable actions succeed with probability one, but their suffix rewards differ. Regret is measured in base-two score units.}
\label{tab:exact-support-magnitude}
\begin{tabular}{@{}lllll@{}}
\toprule
Support rule & Magnitude & Selected & Safe success & Regret \\
\midrule
No rejection & Flat  & $b_D$ & $0$ & $+\infty$ \\
No rejection & Exact & $b_D$ & $0$ & $+\infty$ \\
Exact support & Flat  & $b_L$ & $1$ & $1$ \\
Exact support & Exact & $b_H$ & $1$ & $0$ \\
\bottomrule
\end{tabular}
\end{table}

\paragraph{Finite candidates and partial support.}
Enumerating candidate subsets separates the three sources of loss in
Appendix~\ref{app:proof-ranking-consistency}: missing viable proposals,
rejecting viable candidates, and misranking the viable survivors. Sound rejection
makes the false-rejection term zero, complete rejection removes all unresolved
dead ends, and a no-rejection rule still requires the practical score to rank
a viable action above every unresolved dead end. Under a declared pairwise
total-score error bound $\varepsilon$, the enumerated within-set regret is at
most $\varepsilon$.

These finite models make the support, magnitude, coverage, rejection, and
ranking effects exact; the deployed decoder approximates the corresponding
quantities through the rules specified in Appendix~\ref{app:decoder-procedure}.

\section{Decoding Rules and Evaluation Protocol}
\label{app:decoder-procedure}

We evaluate frozen, task-optimized (base) and safety-aligned (safe) SafeVLA
policies on Safety-CHORES~\citep{zhang2025safevla}: $160$ PickUp, $200$
ObjectNav, and $172$ Fetch episodes per method and checkpoint. Methods share
episode sets and randomization within each setting. The generic rule
(\vicsg{}) shares its coefficients across all six settings; the symbolic
variant (\vicss{}) adds task-specific eligibility rules. These coefficients
were selected using approximately $10\%$ of episodes per task family;
reported results use the full sets. These main comparisons use information
available before execution, without simulating candidate futures.

We report task success, cumulative safety cost, violation count, and episode
length jointly. Safe success requires task completion with zero violations.
Algorithm~\ref{alg:vics-full} gives the generic rule and the PickUp/ObjectNav
symbolic variants; Fetch uses the separate symbolic selection rule described
below.

\subsection{Generic Score and Execution Rule}
\label{app:generic-specification}

\paragraph{Inputs and shared settings.}
The policy receives RGB observations, the instruction, and history. The
safety monitor also observes executed actions, action failures, collisions,
simulator-provided costs, visible-object identities, and target visibility
and distance. The evaluation therefore assumes access to these monitoring
signals in addition to images. Visibility and distance affect admission and
progress checks, but do not enter $z_G$. Unavailable costs, counts, and
history contribute zero; unobserved events are treated as absent.
All six settings use sampling temperature one and
$(\lambda_c,\eta_v)=(1,.1)$ in \Cref{eq:app-generic-score}.

\paragraph{Continuation correction.}
Let $n_t^{\rm id}(a),n_t^{\rm name}(a)$ count matches by action index and
action label among the last five executed actions. Let $E_t(a)$ indicate
a match to the preceding action by either criterion, and let $f_t,c_t$
indicate failure and collision. Write
$\mathcal F=\{\mathsf{End},\mathsf{SubtaskEnd}\}$ for task and subtask
termination, and set
$n_t(a)=\mathbf1\{a\notin\mathcal F\}[n_t^{\rm id}(a)+n_t^{\rm name}(a)]$
and $e_t(a)=E_t(a)\mathbf1\{f_t\lor c_t\}$. The correction is
\begin{equation}
\label{eq:generic-continuation-proxy}
z_G(\mathcal I_t,a)=h(a)-1.20\,n_t(a)-0.75\,e_t(a).
\end{equation}
\Cref{tab:generic-action-features} supplies the action coefficients.
Because both matching criteria contribute, $0\le n_t\le10$ and
$-12.85\le z_G\le .35$. Scores are not standardized, normalized, or clipped;
the displayed correction omits only a candidate-independent offset.

\begin{table}[H]
\centering
\small
\setlength{\tabcolsep}{5pt}
\caption{Shared action features for PickUp, ObjectNav, and Fetch with both checkpoints.
$h$ defines the continuation preference; $(q,\alpha,\gamma)$ define the
repetition term $P_t(a)$ in \Cref{eq:generic-local-risk}.
Small-motion variants share their group's values.}
\label{tab:generic-action-features}
\begin{tabular}{@{}p{.16\linewidth}p{.39\linewidth}rrrr@{}}
\toprule
Action group & Motions or events & $h$ & $q$ & $\alpha$ & $\gamma$\\
\midrule
Forward & Move forward & $.20$ & $3$ & $.125$ & $.075$\\
Turn & Rotate left or right & $.35$ & $2$ & $.25$ & $.175$\\
Arm translation & Raise, lower, extend, or retract & $.05$ & $1$ & $.75$ & $.625$\\
Backward & Move backward & $-.10$ & $1$ & $.625$ & $.50$\\
Other & Grasp, release, wrist rotation, or termination & $0$ & $1$ & $.50$ & $.50$\\
\bottomrule
\end{tabular}
\end{table}

\paragraph{Local robustness.}
The local score penalizes recent harm, repeated actions, and failed retries.
Let $C_t^{\rm obs},u_t$ be the latest safety and total costs, $u_{t,j}$ their
components, and $v_t$ the newly observed object count. With
$\mathcal J=\{\text{danger},\allowbreak\text{fragile},\allowbreak\text{critical},\allowbreak\text{robot},\allowbreak\text{object}\}$,
define $H_t=\mathbf1\{C_t^{\rm obs}>0\lor u_t>0\}$ and collect observed
costs and failures into
\[
R_t=C_t^{\rm obs}+u_t+\sum_{j\in\mathcal J}u_{t,j}
+\tfrac12(u_{t,\rm blind}+u_{t,\rm corner})+f_t+c_t.
\]
Let $I_F=\mathbf1\{a\notin\mathcal F\}$ and $I_A,I_B,I_M,I_R$ indicate
arm translation, backward, forward, and turn actions. With
$P_t(a)=\alpha_a[n_t^{\rm id}(a)-q_a]_+
+\gamma_a[n_t^{\rm name}(a)-q_a]_+$ and $[x]_+=\max(x,0)$,
local robustness is $\widehat\rho_{\rm loc}=1-r_t(a)$, where
\begin{equation}
\label{eq:generic-local-risk}
\begin{aligned}
r_t(a)={}&P_t(a)+1.5 f_t E_t(a)\\
&+I_F\bigl[R_t+.25 I_A+.20 I_B+H_t(I_M+.10 I_R)\\
&\hspace{3.4em}+E_t(a)\bigl(2H_t+.75\mathbf1\{v_t=0\}\bigr)\bigr].
\end{aligned}
\end{equation}
Positive risk is a soft penalty, not a hard prohibition. Terminal actions
retain repetition and failed-retry penalties but omit the bracketed terms.

\paragraph{Candidate construction and intervention gate.}
The candidate set contains the policy sample $a_{\rm pol}$ and the $12$
highest-likelihood actions. Duplicates are removed, with the sample first
in the tie-breaking order ($|\cand|\le13$). Define
\begin{equation}
\label{eq:base-action}
a_0=\argmax_{a\in\cand}\policy(a\mid\mathcal H_t),
\qquad
\delta_t(a)=\log\policy(a_0\mid\mathcal H_t)-\log\policy(a\mid\mathcal H_t).
\end{equation}
The sample and likelihood maximizer need not coincide. Replacements must lie in
\begin{equation}
\label{eq:minimal-trust-region}
\mathcal T_\tau(a_0)=\{a\in\cand:\delta_t(a)\le\tau\},
\end{equation}
with $\tau=6$, increased to $12$ when the target is invisible, no object is
newly observed, and the last ten actions are motion/arm/wrist actions.
The gate checks $a_{\rm pol}$ for a failed or collision-linked retry, failed
terminal admission, or an explicit hard prohibition. Let $\sfilter$ and
$\ffilter$ contain the candidates that pass the predicted-safety and
task-feasibility checks, respectively. The admitted set is
\begin{equation}
\label{eq:app-admitted-candidates}
\bcand=\cand\cap\mathcal T_\tau(a_0)\cap\sfilter\cap\ffilter.
\end{equation}
Candidates are ranked by \Cref{eq:app-generic-score}, retaining the first
on ties. The policy sample is retained if the gate accepts it, no candidate
survives admission, or the proposed replacement fails authorization.

\paragraph{Terminal admission and ObjectNav safeguard.}
Let $V_t$ be target visibility and $d_t^{\rm goal}$ its distance in metres
($\bot$ if unavailable). PickUp and Fetch admit terminal actions when
$V_t\land(d_t^{\rm goal}=\bot\lor d_t^{\rm goal}\le1.2)$.
ObjectNav uses
\begin{equation}
\label{eq:generic-objectnav-terminal}
T_t^{\rm nav}(a,\delta)=[a=\mathsf{End}]\land V_t\land
\bigl[(d_t^{\rm goal}\ne\bot\land d_t^{\rm goal}\le1.2)\lor\delta\le1\bigr],
\end{equation}
with $\delta=\delta_t(a)$ for admission and $\delta=0$ for the initial
sample check. ObjectNav replacements require validity, executability, and
known risk. If $a_{\rm pol}\in\mathcal F$, replacement additionally requires
\begin{equation}
\label{eq:generic-objectnav-safeguard}
\neg T_t^{\rm nav}(a_{\rm pol},\delta_t(a_{\rm pol}))
\land D_t\land[r_t(a_{\rm pol})-r_t(a)\ge3].
\end{equation}
Here $D_t$ requires an explicit prohibition or collision-linked retry and
a positive physical-risk component: safety/total cost, a component in
$\mathcal J$, collision risk, or hard-violation severity. Unknown risk fails
the comparison. Both ObjectNav checkpoints share these rules.
The sensitivity and execution-feedback studies are described in
Appendices~\ref{app:eta-sensitivity} and~\ref{app:feedback-ablation-contract}.

\paragraph{Symbolic variants.}
PickUp and ObjectNav retain the generic candidates and score. Define
$\widehat\chi_S(\mathcal I_t,a)=0$ when a symbolic rule rejects premature
pickup, invalid release, or premature termination, and $1$ otherwise.
The symbolic variant ranks only
$\bcand^S=\{a\in\bcand:\widehat\chi_S(\mathcal I_t,a)=1\}$.
When the symbolic terminal rule confirms readiness, admission is restricted to
the terminal action. If neither restriction applies, generic and symbolic
selection agree under the same candidate order and scores.
Fetch instead uses a rule-filtered recovery procedure over all $20$ actions.
It omits the trust region and samples from a robustness-tilted projection
using a rule-based symbolic score:
$\beta=0$ for the base checkpoint and $.25$ for the safety-aligned checkpoint;
$\eta_v$ does not affect selection. These symbolic variants are task-specific
diagnostics; their partial rules do not certify complete viability.

\begin{algorithm}[H]
\caption{Selective \method{} decoder for \vicsg{} and the canonical symbolic variants.}
\label{alg:vics-full}
\begin{algorithmic}[1]
\State \textbf{Input:} variant $\nu\in\{G,S\}$, frozen policy $\policy$, pre-action information $\mathcal I_t$, candidate budget $K$, trust radius $\tau$, weights $\lambda_c,\eta_v$
\State Form $\cand$ from the policy sample $a_{\mathrm{pol}}$ and top-$K$ actions; deduplicate with the sample first
\State Set the likelihood reference $a_0\gets\argmax_{a\in\cand}\policy(a\mid\mathcal H_t)$
\State Evaluate the intervention conditions in Appendix~\ref{app:generic-specification}
\If{the gate does not fire}
    \State \Return $a_{\mathrm{pol}}$ \Comment{initial gate acceptance}
\EndIf
\State Form $\bcand=\cand\cap\mathcal T_\tau(a_0)\cap\sfilter\cap\ffilter$
\If{$\nu=S$, the symbolic terminal rule confirms readiness, and $a_{\mathrm{terminal}}\in\bcand$}
    \State \Return $a_{\mathrm{terminal}}$ \Comment{terminal boundary condition}
\EndIf
\If{$\nu=S$}
    \State $\bcand\gets\{a\in\bcand:\widehat\chi_S(\mathcal I_t,a)=1\}$ \Comment{partial symbolic support}
\EndIf
\If{$\bcand=\emptyset$}
    \State \Return $a_{\mathrm{pol}}$ \Comment{empty-set fallback}
\EndIf
\State For each $a\in\bcand$, evaluate $S_G(a)$ using \Cref{eq:app-generic-score,eq:generic-local-risk,eq:generic-continuation-proxy}
\State $\widehat a\gets\argmax_{a\in\bcand}S_G(a)$, retaining the first candidate on ties
\If{replacement authorization passes}
    \State \Return $\widehat a$
\Else
    \State \Return $a_{\mathrm{pol}}$ \Comment{final authorization veto}
\EndIf
\end{algorithmic}
\end{algorithm}

\paragraph{Local-robustness baseline.}
\label{app:proof-policy-preservation}
Gate and trust-region choices limit when a correction can affect the
executed action, as discussed in Appendix~\ref{app:proof-ranking-consistency}.
The Robustness Constrained Decoding baseline (\rcd{}) is inspired by
SafeDec~\citep{kapoor2025safedec} and uses the same policy and safety-monitor
observations, with $K=12$ in the benchmark and feedback studies. Each candidate
is assessed at the current state, without predicting a future trajectory.
Its score is
\begin{equation}
\label{eq:rcd-monitor-score}
S_{\mathrm{RCD}}(a)
=\log\policy(a\mid\mathcal H_t)
+\lambda_c\widehat\rho_{\mathrm{RCD}}(a),\qquad\lambda_c=1,
\end{equation}
where $\widehat\rho_{\mathrm{RCD}}(a)=1-r_{\mathrm{RCD}}(a)$. Using the
monitor variables, history counts, and action indicators defined above,
the full risk score is
\begin{equation}
\label{eq:rcd-local-risk}
\begin{aligned}
r_{\mathrm{RCD}}(a)={}&
\alpha_a[n_t^{\rm id}(a)-q_a]_+
+\gamma_a[n_t^{\rm name}(a)-q_a]_++1.5f_tE_t(a)\\
&+I_F\Bigl[C_t^{\rm obs}+u_t+\sum_{j\in\mathcal J}u_{t,j}
+\tfrac12(u_{t,\rm blind}+u_{t,\rm corner})+f_t+c_t\\
&\hspace{3.4em}+.25I_A+.20I_B+H_t(I_M+.10I_R)\\
&\hspace{3.4em}+E_t(a)\bigl(2H_t+.75\mathbf1\{v_t=0\}\bigr)\Bigr].
\end{aligned}
\end{equation}
The coefficients $(q_a,\alpha_a,\gamma_a)$ are given in
\Cref{tab:generic-action-features}; this is the same local-risk function as
\Cref{eq:generic-local-risk}, so $r_{\mathrm{RCD}}(a)=r_t(a)$ in the
evaluated configuration. Terminal actions have $I_F=0$ but retain the
repetition and failed-retry penalties outside the brackets.
The first candidate is retained on score ties. It uses no additional gate,
trust-region filter, symbolic support rule, or final replacement veto; if the
policy returns a single candidate, the decoder simply keeps it. The
$\eta_v=0$ ablation retains the \vicsg{}
gate, trust region, filters, history-dependent local score, and fallback.
It isolates the additional $z_G$ correction and is distinct from \rcd{}.

\section{Feasibility-Weight Sensitivity}
\label{app:eta-sensitivity}

We vary $\eta_v\in\{0,0.05,0.10,0.20\}$ for \vicsg{} in matched sweeps on
PickUp with both checkpoints ($160$ episodes per weight),
ObjectNav with both checkpoints ($200$), and Fetch with both checkpoints ($172$).
All four weights have complete episode coverage in each setting.
\Cref{fig:eta-sensitivity-vics-g} shows changes from the zero-weight ablation
in success, cumulative safety cost, violations, and episode length.

\paragraph{Isolating the additional correction.}
All weights share episode indices and policy seeds within each
setting. Only $\eta_v$ changes; policy parameters, candidate
generation, the trust region, authorization, and fallback remain fixed.
We use $\lambda_c=1$ and $K=12$ top-policy actions plus the policy sample
for PickUp and ObjectNav. Fetch enumerates all $20$ actions, whereas the
main Fetch results use top-$12$ plus the policy sample. The effect of the
weight with this larger candidate set need not carry over unchanged to the
main evaluation. At zero weight,
the local score still penalizes repetition and failed retries, so the
ablation isolates the additional contribution of $z_G$
(\Cref{eq:generic-local-risk}; Appendix~\ref{app:decoder-procedure}).

\paragraph{Paired analysis.}
For outcome $m$ and setting $s$, the plotted mean paired change is
\begin{equation}
\label{eq:eta-paired-contrast}
\widehat\Delta_{s,m}(\eta)
=\frac{1}{n_s}\sum_{i=1}^{n_s}
\left[Y_{s,G,i,m}(\eta)-Y_{s,G,i,m}(0)\right].
\end{equation}
\Cref{tab:eta-zero-contrast} focuses on the nominal weight $\eta=0.10$.
We construct $95\%$ pointwise percentile intervals from $20{,}000$ bootstrap
resamples of paired episodes, sharing each draw across weights
and outcomes within a setting. These exploratory intervals assume
exchangeable episodes conditional on the checkpoint, omit scene clustering
and retraining/repeated-execution variability, and have no multiplicity
correction. A zero-width interval from identical observed success outcomes
does not establish population equivalence.

\begin{table}[H]
\centering
\small
\setlength{\tabcolsep}{3pt}
\renewcommand{\arraystretch}{1.12}
\begin{tabular}{@{}lcccc@{}}
\toprule
Setting & $\Delta$Success (pp) & $\Delta$Cost & $\Delta$Violations & $\Delta$Steps \\
\midrule
PickUp / base & \shortstack{$0.00$\\$[0.00, 0.00]$} & \shortstack{$-0.03$\\$[-0.09, 0.00]$} & \shortstack{$-0.03$\\$[-0.09, 0.00]$} & \shortstack{$0.00$\\$[-0.02, 0.02]$} \\
\addlinespace[2pt]
PickUp / safe & \shortstack{$0.00$\\$[0.00, 0.00]$} & \shortstack{$-0.01$\\$[-0.04, 0.02]$} & \shortstack{$-0.01$\\$[-0.04, 0.02]$} & \shortstack{$0.01$\\$[-0.02, 0.03]$} \\
\addlinespace[2pt]
ObjectNav / base & \shortstack{$-3.00$\\$[-6.00, -0.50]$} & \shortstack{$0.91$\\$[-1.97, 3.57]$} & \shortstack{$1.09$\\$[-0.67, 2.96]$} & \shortstack{$3.25$\\$[-0.22, 7.31]$} \\
\addlinespace[2pt]
ObjectNav / safe & \shortstack{$-1.50$\\$[-3.50, 0.00]$} & \shortstack{$-0.04$\\$[-0.23, 0.14]$} & \shortstack{$-0.07$\\$[-0.23, 0.07]$} & \shortstack{$0.64$\\$[-1.22, 2.77]$} \\
\addlinespace[2pt]
Fetch / base & \shortstack{$1.16$\\$[-1.74, 4.07]$} & \shortstack{$-0.99$\\$[-2.76, 0.10]$} & \shortstack{$-0.54$\\$[-1.57, 0.20]$} & \shortstack{$0.35$\\$[-1.34, 2.28]$} \\
\addlinespace[2pt]
Fetch / safe & \shortstack{$0.00$\\$[0.00, 0.00]$} & \shortstack{$0.01$\\$[-0.15, 0.13]$} & \shortstack{$-0.04$\\$[-0.17, 0.06]$} & \shortstack{$0.51$\\$[-0.13, 1.40]$} \\
\bottomrule
\end{tabular}
\caption{\textbf{Incremental effect of the $z_G$ correction: $\eta_v=0.10$ minus zero.} The gate and history-dependent local score remain active in both arms. Estimates have $95\%$ pointwise bootstrap intervals; success uses percentage points and other outcomes use mean episode differences. Positive success and negative cost/violations favor $0.10$. Fetch uses all $20$ actions.}
\label{tab:eta-zero-contrast}
\end{table}

\paragraph{Observed changes.}
At $\eta_v=0.10$, mean violations decrease in five of six settings and
mean cost in four, with the largest cost reduction on Fetch-base. Success
is unchanged on both PickUp checkpoints and Fetch-safe. All cost and
violation intervals in \Cref{tab:eta-zero-contrast} include zero, so the
observed reductions do not establish a safety improvement.

\paragraph{ObjectNav trade-offs.}
ObjectNav-base is the clearest adverse result: success falls by $3.00$
percentage points, while cost, violations, and episode length increase.
At $0.20$, success, cost, and violations improve relative to $0.10$ but
remain worse than at zero weight. ObjectNav-safe also loses success at the nominal weight. The
continuation correction therefore needs assessment in each setting;
these comparisons do not show a uniform benefit over the zero-weight rule.

\begin{figure}[htbp]
\centering
\includegraphics[width=\textwidth]{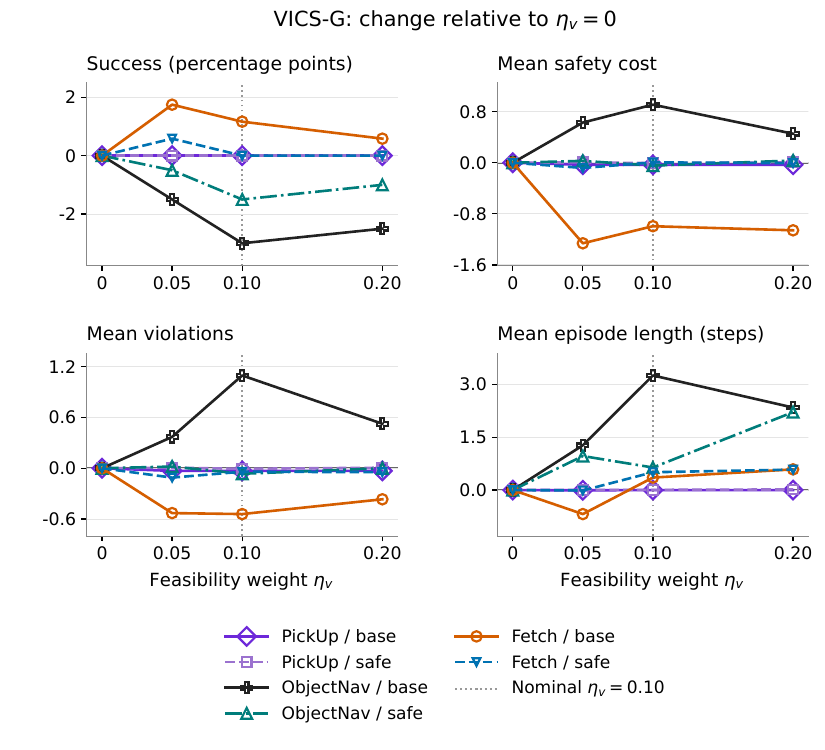}
\caption{\textbf{The continuation correction reduces observed violations in five of six sweep settings at the nominal weight.}
Curves show changes from $\eta_v=0$: success in percentage points and other
outcomes in episode-mean units. The vertical line marks $0.10$; markers
are evaluated weights and lines guide the eye. These descriptive trends
are interpreted with the paired intervals in \Cref{tab:eta-zero-contrast}.
Fetch uses all $20$ actions. Safe denotes a safety-aligned checkpoint.}
\label{fig:eta-sensitivity-vics-g}
\end{figure}

\clearpage
\section{Rollout-Augmented VICS-G: Protocol and Results}
\label{app:rollout-study}

\vicsr{} is \vicsg{} with at most one explicit lookahead decision. At an eligible early
decision, we branch a few admissible actions, estimate their safe-completion
probability from four continuations each, execute the highest-scoring
candidate, and then return control to \vicsg{}. The base-policy weights and
downstream decoder remain fixed. The extra comparison adds an intervention
opportunity at a decision that ordinary \vicsg{} would leave unchanged.

At $\beta=\lambda=0$, \Cref{eq:suffix-mass} is the probability of safe task
completion under the frozen continuation law. In the branchable simulator,
we estimate the corresponding state-conditioned closed-loop quantity by
starting from the reconstructed state, executing candidate $a$, and following
\vicsg{} thereafter. Let $\sigma_t$ denote the reconstructed branching state,
including the interaction history, and let $Y(a)$ indicate task completion
with zero cumulative safety cost within the remaining episode budget:
\begin{equation}
\label{eq:vicsr-safe-completion}
Z_G^{\mathrm{sim}}(\sigma_t,a)
=\E_G^{\mathrm{sim}}[Y(a)\mid\sigma_t,a],
\qquad
\widehat Z_G^{\mathrm{sim}}(\sigma_t,a)
=\frac14\sum_{j=1}^{4}Y_j(a),
\end{equation}
where $Y_j(a)$ is the outcome of the $j$th sampled continuation. This
state-conditioned quantity is the one used by \vicsr{} to compare the future
induced by the current \vicsg{} action with those of its alternatives; unlike
\Cref{eq:suffix-mass}, it is conditioned on a single reconstructed simulator
state and follows \vicsg{} after the candidate.

\paragraph{Experimental setting.}
We compare policy sampling, \rcd{}, \vicsg{}, and \vicsr{} on all
$172$ Fetch tasks using the same safety-aligned checkpoint, matched seeds,
and a $200$-action episode budget. Each method is evaluated once per task,
with executions interleaved across methods. The reported pool includes
$12$ development and $160$ evaluation tasks. Reproducible branching required
targeted modifications to rendering and contact processing. Because these
changes can affect observations, transitions, and safety costs, all four
methods are evaluated afresh in the modified Unity/PhysX simulator and kept
separate from \Cref{tab:main-results}. Paired replay checks establish
agreement in observed branching conditions and unchanged-action
continuations, but cannot certify exact restoration of hidden simulator
state.

We activate the rollout augmentation once during the first $16$ steps,
when \vicsg{} would otherwise retain a nonterminal policy action and no safety
cost has yet been incurred. \vicsr{} compares that action with up to two
admissible alternatives selected by policy likelihood, evaluates each with
four matched continuations over the remaining episode budget, and executes
the candidate with the highest estimated safe-completion probability.
Unresolved estimates or a top tie involving the current action retain it;
other ties use the \vicsg{} score, then action ID. Hard safety and terminal
constraints remain active throughout.

\begin{table}[H]
\centering
\small
\setlength{\tabcolsep}{7pt}
\renewcommand{\arraystretch}{1.06}
\caption{\textbf{Rollout-augmented \vicsg{} on safety-aligned Fetch.}
All methods use the same $172$ tasks and modified simulator. \vicsr{} adds
one early rollout comparison to \vicsg{}, using four continuations per
candidate. Safe success requires completion with zero cumulative cost. Cost
and steps are episode means. Bold marks the highest success rates and lowest
cost, including ties.}
\label{tab:rollout-fetch}
\begin{tabular}{@{}lrrrr@{}}
\toprule
Method & Task success (\%) $\uparrow$ & Safe success (\%) $\uparrow$ & Cost $\downarrow$ & Steps \\
\midrule
Policy sample & 56.98 & 28.49 & 4.052 & 118.89 \\
\rcd{} & 40.12 & 28.49 & \textbf{1.436} & 106.11 \\
\vicsg{} & \textbf{58.72} & 27.91 & 3.233 & 117.65 \\
\rowcolor{oursbg}
\vicsr{} & \textbf{58.72} & \textbf{30.23} & 3.116 & 118.34 \\
\bottomrule
\end{tabular}
\end{table}

\paragraph{Results and interpretation.}
Adding rollout evidence on top of \vicsg{} leaves task success unchanged at
$58.72\%$, raises observed safe success from $27.91\%$ to $30.23\%$, and
reduces mean cost by $3.6\%$ (\Cref{tab:rollout-fetch}); the paired
safe-success difference is not statistically significant (two-sided exact
McNemar $p=0.344$). This exploratory comparison adds both continuation
evidence and an intervention opportunity, so it does not isolate the additive
correction in \Cref{eq:vics-g-score}.

\Cref{cor:rollout-achievability} motivates explicit continuation sampling.
Here, however, the continuation law, state conditioning, and probability-only
ranking differ from the ideal frozen-policy score. Four samples and observed
replay agreement do not establish the corollary's assumptions. Larger budgets
and repeated lookahead remain future work.

\clearpage
\section{Execution Feedback: Results and Statistical Evidence}
\label{app:feedback-ablation-contract}

We test whether the outcome of the preceding command improves the next
decision when execution is imperfect. The comparison includes policy
sampling, \rcd{}, a simple switch-after-failure baseline, and \vicsg{} with
and without execution feedback; clean and perturbed conditions separate
overall performance from the contribution of the feedback signal.

\subsection{Design and paired analysis}

\paragraph{Execution perturbation.}
For each continuous motion channel, the commanded displacement is multiplied
by
\[
    g=\operatorname{clip}(1+\rho Z,\,0.2,\,1.8),
    \qquad Z\sim\mathcal N(0,1),
    \qquad \rho=0.1.
\]
The independent and identically distributed (IID) condition resamples $g$
at each step and channel, whereas the \emph{episode-persistent} condition
holds one gain per channel fixed throughout an episode. Pickup and drop remain
discrete, and the decoder observes neither $\rho$ nor the sampled gain.

\paragraph{Observed feedback and correction.}
The feedback-aware arm receives the commanded and realized motion from the
preceding action; the masked arm receives the same policy observation without
this signal. Both retain the ordinary observation and history, and matched
methods may encounter different observations after their actions diverge.
Feedback is used only when it can be matched unambiguously to the immediately
preceding action and motion channel, and consists of commanded and realized
displacements, their ratio, and the observed failure type. The correction
penalizes actions that repeat a recent execution problem:
\begin{equation}
\label{eq:execution-feedback-correction}
S_{G,\mathrm{fb}}(a)=S_G(a)+\delta_t^{\mathrm{fb}}(a),\qquad
\delta_t^{\mathrm{fb}}(a)=-\operatorname{clip}\!\left(1.5q_t(a),0,1.5\right).
\end{equation}
For a valid observation, let $A$ indicate a candidate matching the preceding
action in both index and label, and $C$ a different action on the same
motion channel. The nonnegative penalty is
\[
q_t(a)=
\begin{cases}
A+\tfrac12C, & \text{collision},\\
\tfrac34A+\tfrac12C, & \text{clamping},\\
\min\{1,[\bar g_t-1]_+\}(A+C), & \text{overactuation},\\
\iota_t(A+\tfrac12C), & \text{underactuation},\\
\tfrac12A+\tfrac14C, & \text{ineffective execution without a gain},\\
0, & \text{otherwise}.
\end{cases}
\]
Here $\iota_t$ is the observed ineffective-execution flag. Failure cases are
tested in the displayed order: overactuation means the latest observed
ratio exceeds $1.2$; underactuation means a ratio below $0.8$ or $\iota_t=1$
with a valid ratio. The ratio is absolute realized displacement divided
by absolute commanded displacement. The estimate $\bar g_t$ averages valid,
noncolliding, unclamped ratios on that channel within the current episode.
Masked, invalid, or stale records give $\delta_t^{\mathrm{fb}}=0$; motion commands with
magnitude at most $10^{-12}$ are invalid. The correction lies in $[-1.5,0]$.
It is added before authorization; a feedback-driven replacement additionally
requires known heuristic risk no greater than the policy sample's risk
up to $10^{-12}$, alongside the existing authorization checks.

\paragraph{Prespecified evaluation.}
The prespecified study uses $172$ Fetch tasks and the safety-aligned
checkpoint. Policy sampling and the two \vicsg{} arms each run twice under
each perturbation mode, giving $688$ executions per method and $2{,}064$ in
the prespecified analysis. Methods are paired by task, policy randomization,
and sampled perturbations. All reranking methods use $K=12$; policy sampling
executes one action directly.

After observing the initial results, we added clean runs for all five
methods and perturbed runs for \rcd{} and \emph{Switch after failure}.
These supplementary comparisons leave the original analysis and testing
sequence unchanged. The complete study contains $4{,}300$ executions:
each method has $172$ clean executions and $344$ executions per perturbation
mode. Safe success denotes task success with zero violations.

Switch after failure uses the policy's proposed action unless it repeats the
action type of the preceding ineffective execution. In that case, it
selects the highest-likelihood candidate of a different action type from
the $K=12$ candidate set, if available. This baseline tests whether a simple
response to ineffective execution explains the observed feedback gains.

\paragraph{Estimand and inference.}
We average replicates and perturbation modes within each task before
comparing methods. For outcome $Y$, method $a$, task $i$, mode $m$, and
replicate $r$, define
\[
\bar Y_{a,i}
=
\frac{1}{2}\sum_{m\in\{\mathrm{iid},\mathrm{persistent}\}}
\left(\frac{1}{2}\sum_{r=0}^{1}Y_{a,i,m,r}\right),
\qquad
\widehat\Delta
=
\frac{1}{172}\sum_{i=0}^{171}
\left(\bar Y_{\mathrm{aware},i}-\bar Y_{\mathrm{masked},i}\right).
\]
Two-sided $95\%$ percentile intervals use $10{,}000$ bootstrap resamples of
the $172$ paired task differences. Perturbations are matched across methods
by task, replicate, decision, and motion channel; verification found no
mismatches in the sampled gains.

\subsection{Completion--safety trade-offs across conditions}
\label{app:feedback-operating-points}

\Cref{tab:feedback-full} separates the pooled result into clean, IID, and
persistent conditions. Under both shifts, the feedback-aware decoder matches
or exceeds policy-sampling success while lowering observed cost and violations at
similar episode lengths. \rcd{} reaches the lowest cost with substantially
lower completion. These are descriptive operating points;
\Cref{tab:feedback-paired} isolates the incremental contribution of feedback.

\begin{table}[H]
\centering
\footnotesize
\setlength{\tabcolsep}{3.4pt}
\renewcommand{\arraystretch}{1.08}
\arrayrulecolor{TableRule}
\caption{\textbf{Under both shifts, the feedback-aware decoder lowers observed cost at near-policy completion.} All five decoders use the same
$172$ tasks. Values are descriptive means; safe success requires zero
violations. Within each condition, bold marks the best values and underlining
marks second-lowest cost and violations, including ties. Steps report exposure
and are unranked.}
\label{tab:feedback-full}
\begin{tabular}{@{}lrrrrr@{}}
\toprule
\rowcolor{TableHead}
Decoder & \shortstack{Success\\(\%) $\uparrow$}
& \shortstack{Safe success\\(\%) $\uparrow$}
& \shortstack{Safety cost\\$\downarrow$}
& \shortstack{Violations\\$\downarrow$} & Steps \\
\midrule
\rowcolor{TableHead}
\multicolumn{6}{@{}l}{\textbf{Clean}$^{\dagger}$ \quad ($172$ executions per method)} \\
Policy sample & 58.14 & 30.81 & 3.628 & 2.773 & 116.63 \\
\rcd{} & 40.70 & 25.58 & \textbf{1.797} & \textbf{1.407} & 104.34 \\
Switch after failure & 58.72 & 30.81 & \underline{3.058} & 2.541 & 116.50 \\
\rowcolor{oursbg}
\vicsg{} (feedback masked) & \textbf{59.88} & \textbf{31.98} & 3.151 & \underline{2.535} & 115.45 \\
\rowcolor{oursbg}
\textcolor{ModernPurpleDark}{\vicsg{} + execution feedback} & 59.30 & 31.40 & 3.529 & 2.797 & 115.65 \\
\midrule
\rowcolor{TableHead}
\multicolumn{6}{@{}l}{\textbf{IID, $\rho=0.1$} \quad ($344$ executions per method)} \\
Policy sample & 54.94 & 28.78 & 3.936 & 3.174 & 116.44 \\
\rcd{}$^{\dagger}$ & 38.37 & 22.67 & \textbf{1.939} & \textbf{1.526} & 103.80 \\
Switch after failure$^{\dagger}$ & 53.49 & 26.16 & 4.006 & 3.180 & 117.42 \\
\rowcolor{oursbg}
\vicsg{} (feedback masked) & 54.94 & 28.49 & 3.794 & 3.020 & 116.59 \\
\rowcolor{oursbg}
\textcolor{ModernPurpleDark}{\vicsg{} + execution feedback} & \textbf{56.10} & \textbf{29.07} & \underline{3.369} & \underline{2.701} & 116.01 \\
\midrule
\rowcolor{TableHead}
\multicolumn{6}{@{}l}{\textbf{Episode-persistent, $\rho=0.1$} \quad ($344$ executions per method)} \\
Policy sample & \textbf{55.23} & 28.49 & 4.724 & 3.669 & 117.28 \\
\rcd{}$^{\dagger}$ & 37.21 & 23.55 & \textbf{2.017} & \textbf{1.541} & 105.63 \\
Switch after failure$^{\dagger}$ & 54.36 & \textbf{29.36} & 4.541 & 3.605 & 119.58 \\
\rowcolor{oursbg}
\vicsg{} (feedback masked) & \textbf{55.23} & 28.49 & 4.273 & 3.340 & 117.35 \\
\rowcolor{oursbg}
\textcolor{ModernPurpleDark}{\vicsg{} + execution feedback} & \textbf{55.23} & 28.78 & \underline{3.916} & \underline{3.006} & 117.72 \\
\bottomrule
\end{tabular}
\par\vspace{0.3em}
\begin{minipage}{\linewidth}
\footnotesize
$^{\dagger}$Comparisons added after the initial results: all clean cells and the shifted
\rcd{} and \emph{Switch after failure} rows. The confirmatory analysis uses only the
original three methods under shift (\Cref{tab:feedback-hierarchy}).
\end{minipage}
\arrayrulecolor{black}
\end{table}

\newpage
\subsection{Incremental feedback effect by condition}
\label{app:feedback-conditional-effects}

Under both perturbations, feedback yields lower mean cost and fewer
violations than masking, with matched or higher success
(\Cref{tab:feedback-paired}). Without perturbations, masking yields lower
cost and fewer violations. All paired intervals include zero, so this
pattern does not establish that feedback is more useful under perturbation.

\begin{table}[H]
\centering
\small
\setlength{\tabcolsep}{6pt}
\renewcommand{\arraystretch}{1.18}
\arrayrulecolor{TableRule}
\caption{\textbf{Paired effects of execution feedback.} Each cell gives the feedback-minus-masked difference and its two-sided $95\%$ paired task-bootstrap interval. Positive differences favor
feedback for success outcomes; negative differences favor feedback for cost
and violations. Success differences use percentage points (pp).}
\label{tab:feedback-paired}
\begin{tabular}{@{}lccc@{}}
\toprule
\rowcolor{TableHead}
Outcome & Clean$^{\dagger}$ & IID shift & Persistent shift \\
\midrule
Success (pp)
& \shortstack{$-0.58$\\$[-2.91,\,1.16]$}
& \shortstack{$+1.16$\\$[-1.16,\,3.20]$}
& \shortstack{$0.00$\\$[-2.33,\,2.33]$} \\
\addlinespace[3pt]
Safe success (pp)
& \shortstack{$-0.58$\\$[-1.74,\,0.00]$}
& \shortstack{$+0.58$\\$[-1.45,\,2.91]$}
& \shortstack{$+0.29$\\$[-1.45,\,2.03]$} \\
\midrule
Safety cost
& \shortstack{$+0.378$\\$[-0.006,\,0.878]$}
& \shortstack{$-0.424$\\$[-1.259,\,0.363]$}
& \shortstack{$-0.358$\\$[-1.794,\,0.683]$} \\
\addlinespace[3pt]
Violations
& \shortstack{$+0.262$\\$[-0.006,\,0.576]$}
& \shortstack{$-0.320$\\$[-0.866,\,0.198]$}
& \shortstack{$-0.334$\\$[-1.375,\,0.462]$} \\
\midrule
Steps
& \shortstack{$+0.20$\\$[-0.41,\,0.87]$}
& \shortstack{$-0.57$\\$[-3.49,\,2.44]$}
& \shortstack{$+0.38$\\$[-2.10,\,2.96]$} \\
\bottomrule
\end{tabular}
\par\vspace{0.3em}
\begin{minipage}{\linewidth}
\footnotesize
Both shifts use $\rho=0.1$ and two replicates, averaged within each of the
$172$ paired tasks. These condition-specific comparisons are exploratory;
clean$^{\dagger}$ was added after the initial results. Effects use unrounded means, so they can
differ from subtraction of the displayed values in \Cref{tab:feedback-full}.
\end{minipage}
\arrayrulecolor{black}
\end{table}

\subsection{Conclusions from the prespecified tests}
\label{app:feedback-confirmatory-conclusions}

Feedback meets the prespecified success tolerance relative to masking.
The cost comparison is inconclusive, so the testing sequence stops and
comparisons with policy sampling remain descriptive
(\Cref{tab:feedback-hierarchy}). Safe success, violations, episode length,
geometry strata, and condition-specific effects are supplementary analyses.

\begin{table}[H]
\centering
\small
\setlength{\tabcolsep}{3.5pt}
\renewcommand{\arraystretch}{1.15}
\arrayrulecolor{TableRule}
\caption{\textbf{Prespecified sequential tests of execution feedback.} Original $2{,}064$-execution
study, with IID and persistent shifts weighted equally. Differences are
feedback-aware minus the reference named in each panel; brackets give
two-sided $95\%$ paired task-bootstrap intervals. The success tolerance is
$5$ percentage points.}
\label{tab:feedback-hierarchy}
\begin{tabular}{@{}lclll@{}}
\toprule
\rowcolor{TableHead}
Step / outcome & \shortstack{Means:\\feedback / ref.}
& Difference [$95\%$ CI] & Required bound & Decision \\
\midrule
\rowcolor{TableHead}
\multicolumn{5}{@{}l}{\textbf{A. Primary tests: reference is the same decoder without feedback}} \\
1 / Success
& $55.67\%\,/\,55.09\%$
& $+0.58\;[-1.16,\,2.33]$ pp
& Lower $>-5$ pp
& \textcolor{moderngreen}{Pass} \\
2 / Safety cost
& $3.642\,/\,4.033$
& $-0.391\;[-1.211,\,0.278]$
& Upper $<0$
& \shortstack[l]{Not established;\\stop} \\
\midrule
\rowcolor{TableHead}
\multicolumn{5}{@{}l}{\textbf{B. Descriptive comparisons: reference is policy sampling}} \\
3 / Success
& $55.67\%\,/\,55.09\%$
& $+0.58\;[-1.16,\,2.33]$ pp
& Lower $>-5$ pp
& Not tested \\
4 / Safety cost
& $3.642\,/\,4.330$
& $-0.688\;[-1.329,\,-0.077]$
& Upper $<0$
& Not tested \\
\bottomrule
\end{tabular}
\par\vspace{0.3em}
\begin{minipage}{\linewidth}
\footnotesize
Required bounds apply to the $95\%$ interval; pp denotes percentage points.
Steps~3--4 were not tested because the sequence stopped at Step~2.
Supplementary comparisons are excluded from these tests.
\end{minipage}
\arrayrulecolor{black}
\end{table}

\paragraph{Data verification.}
Checks found no mismatches in paired perturbations, policy actions, or
execution-feedback observations. After data collection, the nonzero-motion
validity criterion was clarified for two transitions with exactly zero
commanded displacement. This amendment affected neither the actions taken
nor the analysis population.

\section{Related Work}
\label{sec:related-work}

\paragraph{KL-regularized control and probabilistic inference.}
The exponential trajectory tilt used as our reference construction is
inherited from linearly solvable and KL-regularized control, trajectory
optimization by approximate inference, and the broader control-as-inference
formulation~\citep{todorov2006linearly,toussaint2009robot,
levine2018reinforcement}. For language-model distillation,
\citet{zimmer2025rethinking} optimize task reward subject to a bound on
divergence from a teacher. We use the trajectory tilt to define a
policy-relative safe-completion
target, then study the support--magnitude structure and selective
finite-candidate losses that arise when a frozen \vla{} decoder approximates
its next-block marginal.

\paragraph{Vision-language-action policies and safety alignment.}

Generalist robot policies increasingly express control as conditional
generation over action tokens, chunks, or continuous trajectories.
\citet{brohan2022rt1,brohan2023rt2} learn observation- and
instruction-conditioned policies at scale, while \citet{driess2023palme}
incorporate embodied observations into a multimodal language model.
Open generalist policy families~\citep{kim2024openvla,octo2024}, flow-based
action generation~\citep{black2024pi0}, and efficient action
tokenization~\citep{pertsch2025fast} further broaden this foundation.
SafeVLA addresses safety through constrained policy optimization and
introduces Safety-CHORES as an embodied evaluation
environment~\citep{zhang2025safevla}. Our setting is complementary:
task-optimized and safety-aligned checkpoints remain fixed, and the
intervention acts only on candidate selection at inference time.

\paragraph{Constrained and future-aware decoding.}

Inference-time guidance changes a generative model's selection rule without
updating its parameters. Future discriminators and lookahead search evaluate
downstream properties of candidate text
continuations~\citep{yang2021fudge,lu2022neurologicAstar}.
\citet{ji2025inferenceguard} formulate safe language generation as constrained
sequential inference with learned task and safety critics, while
\citet{nguyen2026apps} use future-value estimates to guide particle resampling
toward a sequence-level power target. In robotics, SafeDec applies
temporal-logic constraints to predicted traces through hard masking and
Robustness Constrained Decoding~\citep{kapoor2025safedec}. We study the
history-conditioned mass of policy-supported suffixes that achieve safe
task completion after the candidate-induced transition. The deployed score
is a selective heuristic approximation to that target.

\paragraph{Planning, viability, and runtime assurance.}
Model-based control and Monte Carlo planning compare actions through predicted
future trajectories. \citet{huang2025treeopo} use a teacher's Monte
Carlo search trees to construct a curriculum of prefixes and estimate
prefix-aware advantages for policy training. Safe reinforcement learning
represents long-horizon constraints through budgets, augmented states, or
constrained objectives~\citep{sootla2022saute,lin2023trebi}. Viability theory and
Hamilton--Jacobi reachability characterize states from which admissible
evolution remains possible~\citep{aubin1991viability,bansal2017hamilton},
while runtime shields and predictive safety filters intervene on nominal
policies to enforce specified constraints~\citep{alshiekh2018safe,
hsu2021safety,wabersich2023data}. Our support notion is narrower by construction:
it is defined under the frozen continuation law rather than over all
admissible controls. Correspondingly, the deployed decoder neither
reconstructs a full belief state nor claims invariant-set or closed-loop
guarantees.

\paragraph{Evaluation of embodied safety.}

Safety-oriented VLA evaluation must distinguish task completion from
trajectory safety. Safety-CHORES evaluates embodied tasks with explicit
safety constraints~\citep{zhang2025safevla}. \citet{fan2026safevlabench} report
success-but-unsafe behavior and violation severity to expose unsafe
trajectories hidden by aggregate success. A converse ambiguity arises when
cumulative cost falls because a method terminates earlier or fails sooner.
We therefore report success counts, cumulative cost, violation counts, and
episode length jointly, and do not interpret lower unconditioned cost as
dominance when completion and exposure also decrease.

\end{document}